\documentclass{article}

\usepackage[preprint]{neurips_2026}

\usepackage[utf8]{inputenc}
\usepackage[T1]{fontenc}
\usepackage[hypertexnames=false,bookmarks=false,breaklinks=true,hidelinks]{hyperref}
\usepackage{url}
\usepackage{booktabs}
\usepackage{amsfonts}
\usepackage{amsmath,amssymb,amsthm}
\usepackage{nicefrac}
\usepackage{microtype}
\usepackage{xcolor}
\usepackage{graphicx}
\usepackage{tabularx}
\usepackage{ltablex}
\usepackage{multirow}
\usepackage{array}
\usepackage{float}
\newtheorem{lemma}{Lemma}
\newtheorem{theorem}{Theorem}
\newtheorem{corollary}{Corollary}

\newcommand{\Spart}{S_{\mathrm{part}}}
\newcommand{\Setoe}{S_{\mathrm{e2e}}}
\newcommand{\Apart}{A_{\mathrm{part}}}
\newcommand{\Aetoe}{A_{\mathrm{e2e}}}
\newcommand{\famhat}{\hat{k}_R}
\newcommand{\dfgap}{deployment-fidelity gap}

\hypersetup{
  pdftitle={Partition Scores Are Not System Scores: Deployment-Fidelity Gaps in Decomposed Algorithm Selection},
  pdfauthor={Jiachen Zhang, Yu Tang, Li Zhu},
  pdfsubject={Evaluation methodology for decomposed algorithm-selection systems},
  pdfkeywords={algorithm selection, AutoML, deployment evaluation, oracle metrics}
}

\title{Partition Scores Are Not System Scores:\\
  Deployment-Fidelity Gaps in Decomposed Algorithm Selection}

\author{%
  Jiachen Zhang\thanks{Corresponding author: \texttt{jiaz@uoregon.edu}} \\
  Department of Computer Science \\
  University of Oregon
  \And
  Yu Tang \\
  Huazhong University of Science \\
  and Technology
  \And
  Li Zhu \\
  City University of Hong Kong
}

\begin{document}

\maketitle

\begin{abstract}
Oracle-style quantities, including virtual best solvers,
selected-portfolio VBS, virtual-best encodings, and best-in-family
summaries, are widely reported as upper bounds on what a deployable
selector could achieve. In decomposed algorithm selection, an
analogous partition-level score grants an oracle choice of the best
algorithm within the selected family; once the family selector is
fixed, the deployable system must replace that within-family oracle
with a learned within-family selector.

We define the \emph{\dfgap{}} $G(R)$ as the difference between
partition-level and deployable end-to-end utility and derive two
accounting consequences: a per-instance margin--regret stability
condition that tells us when a partition-time family choice is
deployment-optimal, and a sharp partition-only identification interval
that, when it strictly crosses zero, prevents the partition-level
report from certifying the deployable winner.

Across five public algorithm-selection benchmarks spanning tabular
AutoML and combinatorial CSP/SAT, every decomposed pipeline has
positive $G(R)$, ranging from $0.012$ on TabZilla to $0.13$ on
PROTEUS-2014. Four of ten decomposed-versus-flat decisions have
sign-changing point estimates; on PROTEUS-2014, a $33$-point partition
advantage shrinks to a $20$-point end-to-end advantage. A
training-side validation gap-correction diagnostic recovers the
point-estimate deployable sign on all four sign-changing cells; it is
a reporting aid, not a substitute for direct end-to-end evaluation.
Partition and end-to-end scores should be reported side by side.
\end{abstract}

\section{Introduction}
\label{sec:intro}

Algorithm selection (AS) systems increasingly use decomposed decisions:
rather than selecting directly from a large portfolio, they first select
an algorithm family and then a concrete algorithm within it
\citep{kerschke2019automated,bischl2016aslib}. Decomposition reduces a
hard multiclass problem and exposes interpretable structure, but its
evaluation can differ from its deployment.

A common partition-level evaluation asks whether the selected family
contains a strong algorithm and then grants the best member of that
family. This upper-bounds the partition, but it is not an end-to-end
system: deployment must use a learned within-family selector. We call
the discrepancy the \emph{\dfgap{}}:
\begin{equation}
  G(R) \;=\; \Spart(R) \;-\; \Setoe(R),
  \label{eq:gap-intro}
\end{equation}
where $\Spart$ uses the within-family oracle and $\Setoe$ uses the
deployable selector. $G(R)$ is the utility lost when the oracle is
replaced by the available within-family selector.

Across five public AS benchmarks (B1: TabZilla, B2: TabRepo,
B3: TALENT, B4: MAXSAT-PMS-2016, B5: PROTEUS-2014) spanning tabular
AutoML \citep{mcelfresh2023tabzilla,salinas2024tabrepo,ye2024talent}
and combinatorial search \citep{bischl2016aslib,hurley2014proteus},
all decomposed pipelines show positive \dfgap{}s, from $1.2$ points on
TabZilla to $13$ points on PROTEUS-2014. The gap changes conclusions:
on TabRepo and MAXSAT-PMS-2016, partition-level evaluation favours
decomposition over flat selection but end-to-end point estimates change
sign, with one B2 cell and both B4 cells having small end-to-end
margins; on
TabZilla the advantage is almost absorbed but remains positive
end-to-end; on PROTEUS-2014 a $33$-point partition advantage
shrinks to $20$ points end-to-end ($\rho \approx 0.41$). Overall, the
safety condition fails in $4/10$ decomp-vs-flat deployment decisions,
whereas all $5$ pairwise-vs-multiclass outer-selector comparisons are
low-signal. The dominant risk is therefore whether the within-family
deployment step is included, not which outer selector is used.

Prior work uses oracle baselines such as virtual best solvers (VBS) as
upper bounds \citep{xu2008satzilla,bischl2016aslib,cameron2016vbs},
and recent work studies split- and scale-induced benchmarking pitfalls
\citep{petelin2025pitfalls}. We study a different, \emph{nested}
oracle: the within-family oracle introduced after a family selector has
already acted. The novelty is not that oracle scores are optimistic,
but that a common decomposition score can replace one deployable stage
with an oracle and change comparisons through $G(R_1) - G(R_2)$.

\paragraph{Contributions.}
\begin{enumerate}
  \itemsep0em
  \item \textbf{Formal.} We define $G(R)$ and identify partition score
    as an oracle-assisted upper bound rather than a system score. Two
    accounting lemmas (Lemmas~\ref{lem:gap-identity},~\ref{lem:absorption})
    support a per-instance margin--regret stability theorem
    (Theorem~\ref{thm:margin-regret}) and a sharp partition-only
    identification theorem (Theorem~\ref{thm:identification}) showing
    when partition reports cannot certify the deployable winner.
  \item \textbf{Empirical.} On five public AS benchmarks we separate
    \emph{decomp-vs-flat} deployment decisions, where the safety
    condition fails in $4/10$ comparisons and B2/B4 have sign-changing
    point estimates, from
    \emph{decomp-vs-decomp} outer-selector comparisons, where pairwise
    and multiclass are nearly indistinguishable
    ($|\Apart| < 5\times 10^{-3}$). Margin--regret violation rates
    ($0.23$--$0.56$) and identification intervals (all $15$ pairs
    cross zero) corroborate the theorems.
  \item \textbf{Practical.} A training-side gap-correction diagnostic
    recovers the point-estimate deployable sign on all four
    sign-changing cells; a deployment-aware family selector shrinks
    $G(R)$ on every
    benchmark and improves $\Setoe$ on B1--B4, with a small PROTEUS
    trade-off. We give a
    checklist requiring $\Spart$, $\Setoe$, $G(R)$, the safety
    condition, and the identification interval to be reported together
    (\S\ref{sec:guidance}).
\end{enumerate}

\section{Related Work}
\label{sec:related}

\paragraph{Oracle baselines and VBS.} Algorithm selection has long used
oracle baselines such as the Virtual Best Solver (VBS) as an upper bound
on portfolio performance \citep{xu2008satzilla}. Frameworks such as
AutoFolio \citep{lindauer2015autofolio} and standardised AS scenarios in
ASlib \citep{bischl2016aslib} explicitly compare deployable selectors to
this VBS, and the AS competitions normalise scores by the SBS--VBS
interval \citep{lindauer2019competitions}. \citet{cameron2016vbs} further
document that VBS evaluation itself can be optimistically biased when
solvers are randomised. Oracle-style scores beyond the full-portfolio VBS
are also routinely reported. \emph{Selected-portfolio} VBS appears in
$k$-portfolio studies \citep{bach2022kportfolios} and in portfolio
selection for automated algorithm selection
\citep{kostovska2023psaas}; \emph{subportfolio} virtual-best scores are
reported alongside the deployable system in hierarchical solver
portfolios such as Proteus \citep{hurley2014proteus}, whose four
families correspond to a CSP-native solver branch and three SAT
encodings of the input. Encoding-level selection has its own
\emph{virtual-best encoding} as a standard upper bound
\citep{stojadinovic2014mesat,ulricholtean2022encodings,ulricholtean2023learning},
and tabular benchmarking literature reports
\emph{best-in-family} summaries -- best deep model versus best gradient
boosting model -- as the primary comparison object
\citep{mcelfresh2023tabzilla,shmuel2025tabular}. A literature audit of
twelve such reporting objects, the claim each row supports, and the strength
of that support is given in Appendix~\ref{app:literature-audit};
\citet{shmuel2025tabular} is a representative tabular instance in which
the best-of-DL versus best-of-TE comparison is the headline conclusion
and no within-family deployable selector is specified.

Closest in spirit, \citet{tornede2023metalevel} define an AS-oracle
over algorithm selectors and show that lifting the oracle to that
meta level can degrade oracle performance; our setting differs in
that the hidden oracle is over algorithms inside an already-chosen
family rather than over complete selectors, so the relevant loss is
the within-family \dfgap{} $G(R)$.

\paragraph{Methodological critiques.}
\citet{petelin2025pitfalls} identify two pitfalls in feature-based AS
benchmarking --- leave-instance-out (LIO) splits and scale-sensitive
performance targets --- both orthogonal to our concern: LIO controls
\emph{which} test instances are held out and scale sensitivity
controls the numeric \emph{target}, whereas the \dfgap{} concerns
\emph{which system} is evaluated. Unlike split or target-scale
effects, the \dfgap{} changes the evaluated object itself: an
oracle-assisted partition score replaces one stage of the deployable
selector.

\paragraph{Decomposition in algorithm selection and classification.}
Pairwise classification \citep{sun2013pairwise,galar2011overview},
multiclass selectors, and family-level decompositions are well-studied as
\emph{training} strategies, often analysed within the
error-correcting-output-code framework \citep{allwein2000reducing}. We
study how such decompositions are \emph{evaluated} once an inner
within-family decision is left to a learned selector at deployment time.

\section{Setup}
\label{sec:setup}

Let $x \in \mathcal{X}$ denote a problem instance (e.g.\ a dataset or a
SAT formula), $\mathcal{A}$ a finite algorithm pool, and
$\Pi = \{F_1,\dots,F_K\}$ a partition of $\mathcal{A}$ into algorithm
families ($F_i \cap F_j = \emptyset$ for $i \neq j$ and
$\bigcup_k F_k = \mathcal{A}$). Let $u(x,a) \in \mathbb{R}$ denote the
utility of algorithm $a$ on instance $x$; higher is better. Throughout the
paper we use a utility convention for both score and advantage. Benchmarks
whose native metric is a regret or a runtime are mapped to utilities by a
\emph{per-instance} monotone transformation (Appendix~\ref{app:transform}),
which preserves the within-instance ordering of algorithms; all reported
magnitudes and advantages are therefore interpreted on the resulting
normalised utility scale.

A \emph{decomposed selector} $R$ consists of a family-level selector
$h_R : \mathcal{X} \rightarrow \{1,\dots,K\}$ and, for each family $k$, a
within-family selector $g_{R,k} : \mathcal{X} \rightarrow F_k$. Let
$\famhat(x) = h_R(x)$ denote the predicted family, and define the
family-$k$ oracle-best algorithm
\begin{equation}
  a_k^{*}(x) \;=\; \arg\max_{a \in F_k} u(x,a).
\end{equation}

\paragraph{Partition score.} The \emph{partition-level} evaluation grants
an oracle choice within the predicted family:
\begin{equation}
  \Spart(R) \;=\; \mathbb{E}_{x}\bigl[\, u\bigl(x,\, a_{\famhat(x)}^{*}(x) \bigr) \,\bigr].
  \label{eq:spart}
\end{equation}

\paragraph{End-to-end score.} The \emph{end-to-end} evaluation deploys the
within-family selector that is actually available:
\begin{equation}
  \Setoe(R) \;=\; \mathbb{E}_{x}\bigl[\, u\bigl(x,\, g_{R,\famhat(x)}(x) \bigr) \,\bigr].
  \label{eq:se2e}
\end{equation}

\paragraph{Deployment-fidelity gap.} $G(R) := \Spart(R) - \Setoe(R)$
measures how much partition-level evaluation overstates the deployable
system. For two pipelines $R_1, R_2$ evaluated on the same test
distribution, the partition and end-to-end advantages are
$\Apart := \Spart(R_1) - \Spart(R_2)$ and
$\Aetoe := \Setoe(R_1) - \Setoe(R_2)$.

\section{From Partition Scores to Deployment Stability}
\label{sec:algebra}

This section relates the four estimands of \S\ref{sec:setup} in three
steps. \S\ref{sec:lemmas} records two accounting lemmas: $G(R)$ equals
within-family regret, and advantage absorption equals the difference of
gaps. \S\ref{sec:margin-regret} states a margin--regret stability
theorem characterising, instance by instance, when the family chosen at
partition time is also deployment-optimal. \S\ref{sec:identification}
shows that even with selected-family utility ranges added to a
partition-level report, the deployable advantage between two pipelines
is identifiable only within an interval, and that when the interval
strictly crosses zero the report cannot certify a deployable winner.

\subsection{Accounting lemmas}
\label{sec:lemmas}

\begin{lemma}[Gap identity]
\label{lem:gap-identity}
For any decomposed selector $R$,
\begin{equation}
  G(R) \;=\; \mathbb{E}_{x}\Bigl[\, u\bigl(x,\, a_{\famhat(x)}^{*}(x) \bigr) - u\bigl(x,\, g_{R,\famhat(x)}(x) \bigr) \,\Bigr] \;\geq\; 0,
\end{equation}
with equality if and only if the within-family selector chooses an
oracle-optimal algorithm inside the predicted family almost surely.
\end{lemma}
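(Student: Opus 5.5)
The identity follows from linearity of expectation applied to the definitions \eqref{eq:spart} and \eqref{eq:se2e}. I would first make the integrability explicit: assume $\mathbb{E}_x|u(x,a)|<\infty$ for each $a\in\mathcal{A}$. Since $\mathcal{A}$ is finite, the random variables $u(x,a^*_{\famhat(x)}(x))$ and $u(x,g_{R,\famhat(x)}(x))$ are each dominated by $\sum_{a\in\mathcal{A}}|u(x,a)|$, so both are integrable. Measurability of the composed maps is inherited from $h_R$, the $g_{R,k}$, and the (finitely many) $u(\cdot,a)$. In particular, the measurable choice of $a_k^*$ is a finite argmax with any fixed tie-breaking rule. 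With this in place, $G(R)=\Spart(R)-\Setoe(R)$ is the expectation of the pointwise difference
\[
\Delta_R(x):=u\bigl(x,a^*_{\famhat(x)}(x)\bigr)-u\bigl(x,g_{R,\famhat(x)}(x)\bigr),
\]
which is the displayed formula.

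For nonnegativity, I would argue pointwise. Fix $x$ and let $k=\famhat(x)$. By construction $g_{R,k}(x)\in F_k$, and $a_k^*(x)$ maximises $u(x,\cdot)$ over $F_k$. Hence $u(x,a_k^*(x))\ge u(x,g_{R,k}(x))$, so $\Delta_R(x)\ge 0$ for every $x$, and monotonicity of expectation gives $G(R)\ge 0$. The fact that both selectors act inside the \emph{same} predicted family $\famhat(x)$ is what makes the comparison valid. I would stress this point, since it is exactly where the nested-oracle structure enters.

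For the equality case, I would use the standard fact that a nonnegative integrable random variable has zero expectation if and only if it vanishes almost surely. Thus $G(R)=0$ if and only if $u(x,g_{R,\famhat(x)}(x))=\max_{a\in F_{\famhat(x)}}u(x,a)$ for almost every $x$. That is, the within-family selector picks an oracle-optimal member of the predicted family almost surely. The only subtlety I expect is the interpretation of ``oracle-optimal'' under ties. The argmax defining $a_k^*$ may be non-unique, so the condition should be read as ``attains the within-family maximum utility'', not as ``equals the particular tie-broken $a_k^*(x)$''. I would state this reading explicitly; with it, both directions are immediate.
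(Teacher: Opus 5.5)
Your proposal is correct and follows essentially the same route as the paper's proof: linearity of expectation gives the identity, pointwise nonnegativity holds because $g_{R,\famhat(x)}(x)$ and $a^{*}_{\famhat(x)}(x)$ lie in the same predicted family, and the equality case follows because a nonnegative variable with zero mean vanishes almost surely. Your extra remarks on integrability, measurable tie-breaking, and reading ``oracle-optimal'' as ``attains the within-family maximum'' are sound refinements that the paper leaves implicit.
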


\begin{proof}[Proof sketch]
Substitute~\eqref{eq:spart},~\eqref{eq:se2e} into $G(R)$ and use
nonnegativity of $u(x, a_{\famhat(x)}^*(x)) - u(x, g_{R,\famhat(x)}(x))$
on each instance. Full proof in Appendix~\ref{app:proofs}.
\end{proof}

\begin{lemma}[Advantage absorption]
\label{lem:absorption}
For any two decomposed selectors $R_1, R_2$,
\begin{equation}
  \Aetoe \;=\; \Apart \;-\; \bigl[\, G(R_1) - G(R_2) \,\bigr],
  \label{eq:absorption}
\end{equation}
or equivalently $\Apart - \Aetoe = G(R_1) - G(R_2)$.
\end{lemma}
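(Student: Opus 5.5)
The identity is purely algebraic, so I will expand both sides from the definitions in \S\ref{sec:setup} and regroup terms. Nothing beyond linearity of subtraction of real numbers is needed. In particular, I do not even need Lemma~\ref{lem:gap-identity} except as the definition $G(R)=\Spart(R)-\Setoe(R)$.

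The proof has three steps.
\begin{enumerate}
  \item Write $G(R_1)-G(R_2)=\bigl[\Spart(R_1)-\Setoe(R_1)\bigr]-\bigl[\Spart(R_2)-\Setoe(R_2)\bigr]$.
  \item Regroup this as $\bigl[\Spart(R_1)-\Spart(R_2)\bigr]-\bigl[\Setoe(R_1)-\Setoe(R_2)\bigr]=\Apart-\Aetoe$, using the definitions of the two advantages.
  \item Rearrange to obtain~\eqref{eq:absorption}, namely $\Aetoe=\Apart-[G(R_1)-G(R_2)]$; the equivalent form stated in the lemma is literally step 2.
\end{enumerate}

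The only point needing care is that all four scores are finite real numbers, so that the subtractions and the regrouping are legitimate and no $\infty-\infty$ arises. I will state this as a standing assumption: $u(x,a)$ is integrable under the test distribution for each $a$. Because $\mathcal{A}$ is finite, this makes $u(x,a^*_{\famhat(x)}(x))$ and $u(x,g_{R,\famhat(x)}(x))$ integrable as well, since each is bounded by $\sum_a |u(x,a)|$.

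The other requirement is that both pipelines are evaluated on the same test distribution, as stipulated in \S\ref{sec:setup}. This is what makes $\Apart$, $\Aetoe$, and the gaps refer to a common expectation. With these two conditions in place, there is no real obstacle.
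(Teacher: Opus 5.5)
Your proposal is correct and matches the paper's proof, which substitutes $\Setoe(R)=\Spart(R)-G(R)$ into $\Aetoe=\Setoe(R_1)-\Setoe(R_2)$ and rearranges; your regrouping of $G(R_1)-G(R_2)$ is the same algebra run in the opposite direction. The finiteness/integrability assumption you add is left implicit in the paper but is a sensible standing assumption, and it holds automatically in the experiments, where utilities lie in $[0,1]$.
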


\begin{proof}[Proof sketch]
$\Setoe(R) = \Spart(R) - G(R)$ by definition; expand $\Aetoe = \Setoe(R_1) - \Setoe(R_2)$. Full proof in Appendix~\ref{app:proofs}.
\end{proof}

The lemmas have two immediate consequences we use throughout the paper.
First (Corollary~\ref{cor:family-vs-within}), $G(R)$ measures
within-family suboptimality: it is zero exactly when the within-family
selector is family-oracle, regardless of whether $\famhat(x)$ is the
globally optimal family. Second (Corollary~\ref{cor:flat-safety}), the
partition-level claim ``$R$ beats $F$'' transfers to end-to-end
evaluation iff $\Apart(R, F) > \Delta G(R, F)$, where
$\Delta G := G(R) - G(F)$; for an ideal flat baseline ($G(F) = 0$) this
reduces to $\Apart > G(R)$.

\begin{corollary}[Family error vs.\ within-family error]
\label{cor:family-vs-within}
$G(R)$ depends on $\famhat(x)$, but not on whether $\famhat(x)$ is
globally optimal: even if $R$ chooses the wrong family, the gap is zero
whenever the within-family selector picks an oracle-best algorithm in
that wrong family; conversely, even with the globally optimal family
selected, $G(R) > 0$ whenever the within-family selector deviates from
$a_{\famhat(x)}^{*}(x)$.
\end{corollary}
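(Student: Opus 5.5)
The plan is to read the corollary directly off Lemma~\ref{lem:gap-identity}, whose integrand depends on $x$ only through the triple $\bigl(x,\famhat(x),g_{R,\famhat(x)}(x)\bigr)$. Define the per-instance within-family regret
\[
  r_R(x) \;:=\; u\bigl(x, a^{*}_{\famhat(x)}(x)\bigr) - u\bigl(x, g_{R,\famhat(x)}(x)\bigr) \;\ge\; 0,
\]
so that $G(R)=\mathbb{E}_x[r_R(x)]$. The first claim, that $G(R)$ depends on $\famhat(x)$ but not on global optimality, follows by inspection. The only family index in $r_R(x)$ is $\famhat(x)$. The globally optimal family $k^\dagger(x)=\arg\max_k u(x,a_k^*(x))$ never enters $r_R(x)$. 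Hence two selectors with the same $h_R$ and the same inner selectors have the same $G$, and changing which family is globally optimal (for example, by altering utilities outside $F_{\famhat(x)}$) leaves $r_R(x)$ unchanged.

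For the \emph{wrong-family} direction, take any $R$ with $\famhat(x)\neq k^\dagger(x)$ on a set of positive measure. Suppose that $g_{R,\famhat(x)}(x)\in\arg\max_{a\in F_{\famhat(x)}}u(x,a)$ almost surely. Then $r_R(x)=0$ almost surely, so the equality case of Lemma~\ref{lem:gap-identity} gives $G(R)=0$, even though $\Spart(R)$ falls short of the full-portfolio oracle. I will state this so that ties in $\arg\max$ are allowed: any oracle-best member counts.

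For the \emph{right-family} direction, suppose $\famhat(x)=k^\dagger(x)$. I will make the hypothesis precise as follows: the inner selector picks a strictly suboptimal member, $u(x,g_{R,\famhat(x)}(x))<u(x,a^*_{\famhat(x)}(x))$, on a set of positive probability. Then $r_R\ge 0$ everywhere and $r_R>0$ on a set of positive measure, so $\mathbb{E}[r_R]>0$. This is again the ``only if'' half of the equality clause in Lemma~\ref{lem:gap-identity}.

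The main obstacle is the informal wording ``deviates from $a^*$''. If the deviation lands on a tied optimum, or happens only on a null set, the gap stays zero. I will therefore interpret deviation as strict utility suboptimality on a positive-probability set, and note that this is exactly the negation of the lemma's equality condition. Everything else reduces to the lemma.
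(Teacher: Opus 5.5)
Your proposal is correct and follows the paper's route: the paper gives no separate proof and reads the corollary directly off Lemma~\ref{lem:gap-identity}, whose integrand involves only $\famhat(x)$ and the inner choice, with the lemma's equality clause giving both directions. Your sharpening of ``deviates from $a^{*}_{\famhat(x)}(x)$'' to strict suboptimality on a set of positive probability is exactly the negation of the lemma's almost-sure equality condition, and it is needed for the strict inequality $G(R)>0$ to hold under ties or null-set deviations.
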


\begin{corollary}[General deployment-safety condition]
\label{cor:flat-safety}
For any two decomposed selectors $R$ and $F$ (the latter typically a
flat baseline), the partition-level conclusion ``$R$ beats $F$''
transfers to end-to-end evaluation if and only if
$\Apart(R, F) > \Delta G(R, F)$, where $\Delta G := G(R) - G(F)$. When
$G(F) = 0$ this reduces to $\Apart > G(R)$.
\end{corollary}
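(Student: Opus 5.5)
The plan is to reduce Corollary~\ref{cor:flat-safety} to Lemma~\ref{lem:absorption} by a single rearrangement. I read ``transfers to end-to-end evaluation'' as saying that the end-to-end advantage has the same strict sign as the claimed partition advantage, that is, $\Aetoe(R,F) > 0$. The corollary is therefore the statement that $\Aetoe(R,F) > 0$ if and only if $\Apart(R,F) > \Delta G(R,F)$.

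First, I apply Lemma~\ref{lem:absorption} with $R_1 = R$ and $R_2 = F$. This is legitimate because the lemma is stated for any two decomposed selectors, and it gives
\[
\Aetoe(R,F) = \Apart(R,F) - \bigl[G(R) - G(F)\bigr] = \Apart(R,F) - \Delta G(R,F).
\]
Second, I note that this is an exact identity, not an inequality. Hence $\Aetoe > 0$ holds if and only if $\Apart - \Delta G > 0$, which is the same as $\Apart > \Delta G$, and both directions of the equivalence come at once. Third, for the special case, a flat baseline with $G(F) = 0$ has $\Delta G = G(R)$, so the condition becomes $\Apart > G(R)$. Such a baseline can be modelled as a decomposed selector whose within-family selector attains the family oracle, for example with singleton families or with the whole pool treated as one family. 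By Lemma~\ref{lem:gap-identity} we have $G(R) \ge 0$, so this special-case condition is at least as strong as the partition-level claim $\Apart > 0$ itself.

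The only real obstacle is interpretive. One has to fix what ``transfers'' means, namely strict positivity of $\Aetoe$ given the partition claim, and check that the flat baseline genuinely fits the decomposed-selector framework, so that Lemma~\ref{lem:absorption} applies to it. I would also require that all expectations are finite and taken over the same test distribution for both pipelines, since Lemma~\ref{lem:absorption} implicitly assumes this. With those conventions fixed, the argument is a one-line rearrangement.
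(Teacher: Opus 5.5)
Your proposal is correct and matches the paper's proof: apply Lemma~\ref{lem:absorption} with $R_1=R$, $R_2=F$ to get the exact identity $\Aetoe = \Apart - \Delta G$, read off $\Aetoe>0 \iff \Apart>\Delta G$, and set $G(F)=0$ for the special case. Your extra observation that $G(R)\ge 0$ (Lemma~\ref{lem:gap-identity}) makes the special-case condition imply $\Apart>0$ is correct, though the paper's proof does not need it.
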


\begin{proof}[Proof sketch]
Apply Lemma~\ref{lem:absorption}: $\Aetoe > 0 \iff \Apart > \Delta G$.
\end{proof}

Corollary~\ref{cor:flat-safety} turns the apparently tautological
identity ($G(F) = 0$ for ideal flat baselines) into a falsifiable
empirical safety condition; \S\ref{sec:empirical} documents how often it
fails. The general $\Delta G$ form is kept because it covers arbitrary
reference systems, while our singleton flat controls enforce the
invariant $G(F)=0$ by applying the same missing-value fallback to
partition and end-to-end singleton scores.

\subsection{Margin--regret stability of family selection}
\label{sec:margin-regret}

The lemmas describe \emph{aggregate} accounting. To say when partition
conclusions transfer at the level of an individual instance, we
decompose deployable utility into an oracle family margin and a
within-family regret, then read off when the family selected at
partition time is deployment-optimal.

\paragraph{Per-family quantities.}
For each instance $x$ and each family $k$ define
\begin{equation}
  m_k(x) \;=\; \max_{a \in F_k} u(x, a),
  \qquad
  v_k(x) \;=\; u(x,\, g_{R,k}(x)),
  \qquad
  r_k(x) \;=\; m_k(x) - v_k(x),
\end{equation}
so $v_k(x) = m_k(x) - r_k(x)$. Here $m_k(x)$ is family $k$'s oracle
utility on $x$, $v_k(x)$ is the deployable utility realised by $R$'s
within-family selector $g_{R,k}$ inside family $k$, and
$r_k(x) \ge 0$ is the within-family regret. Writing $s = \famhat(x)$ for
the family that the partition-time selector actually chooses, the
deployable utility of $R$ at $x$ is exactly $v_s(x)$.

\begin{theorem}[Margin--regret deployment stability]
\label{thm:margin-regret}
Fix an instance $x$ and a family selector $\famhat$, and write
$s = \famhat(x)$. The deployment regret of selecting family $s$
relative to the best deployable family on $x$ is
\begin{equation}
  \mathrm{Reg}_{\mathrm{dep}}(\famhat;\, x)
  \;=\; \max_{q \in [K]} v_q(x) \;-\; v_s(x)
  \;=\; \max_{q \in [K]}\bigl[\,(m_q(x) - m_s(x)) - (r_q(x) - r_s(x))\,\bigr].
\end{equation}
In particular, $s$ is deployment-optimal at $x$ if and only if
\begin{equation}
  m_s(x) - m_q(x) \;\ge\; r_s(x) - r_q(x)
  \quad\text{for every competing family } q \in [K].
  \label{eq:stability-condition}
\end{equation}
\end{theorem}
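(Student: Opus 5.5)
The argument is pure per-instance algebra on the definitions $m_k$, $v_k$, $r_k$ above; no expectation and no earlier lemma is needed beyond the identity $v_k(x)=m_k(x)-r_k(x)$. I fix $x$ and $\famhat$ and set $s=\famhat(x)$. The first equality in the theorem is the definition of deployment regret. At $x$, the deployable utility of choosing family $q$ is $v_q(x)=u(x,g_{R,q}(x))$, and the utility actually realised is $v_s(x)$. So the regret relative to the best deployable family is $\max_{q\in[K]} v_q(x)-v_s(x)$.

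For the second equality, I use that $v_s(x)$ does not depend on $q$, so it can be moved inside the maximum:
\begin{equation*}
\max_{q} v_q(x)-v_s(x)=\max_{q}\bigl[v_q(x)-v_s(x)\bigr].
\end{equation*}
Substituting $v_k=m_k-r_k$ for $k=q$ and $k=s$ gives
\begin{equation*}
v_q(x)-v_s(x)=(m_q(x)-m_s(x))-(r_q(x)-r_s(x)),
\end{equation*}
which is the stated expression. The maximum exists because $[K]$ is finite.

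For the characterisation, deployment-optimality of $s$ at $x$ means $v_s(x)=\max_q v_q(x)$, i.e.\ $\mathrm{Reg}_{\mathrm{dep}}(\famhat;x)=0$. The regret is always nonnegative, since the term $q=s$ contributes $0$ to the maximum. Hence
\begin{equation*}
\mathrm{Reg}_{\mathrm{dep}}(\famhat;x)=0 \iff \max_q\bigl[v_q(x)-v_s(x)\bigr]\le 0 \iff v_q(x)\le v_s(x)\ \text{for every } q\in[K].
\end{equation*}
Rewriting $v_s(x)-v_q(x)\ge 0$ with the same substitution gives $(m_s(x)-m_q(x))-(r_s(x)-r_q(x))\ge 0$, which rearranges to~\eqref{eq:stability-condition}. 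The condition at $q=s$ holds trivially, so the statement can equivalently range over $q\neq s$.

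I do not expect a genuine obstacle. The only points needing care are the direction of the inequality when passing from "the maximum is $\le 0$" to "every term is $\le 0$" (both directions hold), and the observation that "deployment-optimal" means attaining the maximum of $v_q$, not of $m_q$. The latter is what separates this statement from the partition-level notion used in Corollary~\ref{cor:family-vs-within}.
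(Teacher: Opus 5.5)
Your proof is correct and follows the paper's own argument: substitute $v_k = m_k - r_k$ to get $v_q(x) - v_s(x) = (m_q(x) - m_s(x)) - (r_q(x) - r_s(x))$, maximise over $q$, and use $\mathrm{Reg}_{\mathrm{dep}} \ge 0$ (the $q = s$ term is zero) to reduce deployment-optimality to the termwise inequality. Where the paper says ``non-negativity of regret,'' you correctly take it to mean non-negativity of the deployment regret, not of the within-family regrets $r_k$; the equivalence needs only the former.
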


\begin{proof}[Proof sketch]
Substitute $v_k = m_k - r_k$ and maximise over $q$;
non-negativity of regret yields~\eqref{eq:stability-condition}.
Full proof in Appendix~\ref{app:proofs}.
\end{proof}

\paragraph{Reading the theorem.} The decomposition makes explicit what
the aggregate identity $\Aetoe = \Apart - \Delta G$ hides. Deployment
failure of family $s$ relative to family $q$ is controlled by a
\emph{differential} within-family regret $r_s - r_q$ measured against an
oracle family margin $m_s - m_q$: $s$ stays deployment-optimal as long
as it has a larger oracle margin than the gap between its within-family
selector and family $q$'s. Two practical consequences. First, the
partition-optimal family $p(x) \in \arg\max_k m_k(x)$ remains
deployment-optimal at $x$ iff $m_p - m_q \ge r_p - r_q$ for every
$q \ne p$; otherwise the partition-time choice $p(x)$ is dominated in
deployment by some competitor with smaller oracle margin but smaller
within-family regret too. Second, the Bayes-optimal deployable outer
target is $v_k(x) = m_k(x) - r_k(x)$, not $m_k(x)$. Training a family
selector against $m_k$ optimises family oracle potential; training
against (a cross-fitted estimator of) $v_k$ optimises the quantity that
determines deployment-optimal family choice. \S\ref{sec:guidance}
returns to this in describing our deployment-aware family selector.

A synthetic toy example with two singleton-conflicting decompositions
illustrates the general deployment-safety condition of
Corollary~\ref{cor:flat-safety}: $\Apart \approx +0.20$ but
$\Aetoe \approx -0.24$ because the differential gap exceeds the
partition advantage ($\rho \approx 2.2$). The per-instance
margin--regret condition explains the within-pipeline mechanism; the
cross-pipeline sign change itself is governed by
Corollary~\ref{cor:flat-safety} (Appendix~\ref{app:toy}).

\subsection{Why partition-level reports cannot certify deployable winners}
\label{sec:identification}

Theorem~\ref{thm:margin-regret} characterises one pipeline's deployment
stability. We now turn to comparisons of two pipelines $R_1, R_2$ and
ask: from a partition-level report, can the reader certify the
deployable winner? The answer is negative; even with selected-family
utility ranges added to the report, the deployable advantage $\Aetoe$
is identifiable only within an interval that may cross zero.

\paragraph{Selector classes and gap envelopes.}
For pipeline $R$ with fixed family selector $\famhat$, let
$\mathcal{G}_R$ denote a class of admissible within-family selectors
$g$, each producing a gap $G_g(R) \ge 0$ via Lemma~\ref{lem:gap-identity}.
Define the lower and upper gap envelopes
\begin{equation}
  \underline{G}(R) \;:=\; \inf_{g \in \mathcal{G}_R}\, G_g(R),
  \qquad
  \overline{G}(R) \;:=\; \sup_{g \in \mathcal{G}_R}\, G_g(R).
\end{equation}
The unrestricted family-respecting class (every measurable $g$ with
$g(x) \in F_{\famhat(x)}$) gives $\underline{G}(R) = 0$ (oracle
within-family) and
$\overline{G}(R) = W_R := \mathbb{E}_x\bigl[\,
\max_{a \in F_{\famhat(x)}} u(x,a) - \min_{a \in F_{\famhat(x)}} u(x,a)
\,\bigr]$ (worst within-family choice attainable as a measurable
selector).

\begin{theorem}[Partition-only identification]
\label{thm:identification}
Let $R_1, R_2$ be two decomposed selectors and let
$\mathcal{G}_{R_i}$ be admissible within-family selector classes.
Suppose only $\Spart(R_1)$, $\Spart(R_2)$, and the envelopes
$\underline{G}(R_i), \overline{G}(R_i)$ are reported. Then
\begin{equation}
  \Aetoe \;\in\;
  \bigl[\,
    \Apart - \overline{G}(R_1) + \underline{G}(R_2),
    \;\;
    \Apart - \underline{G}(R_1) + \overline{G}(R_2)
  \,\bigr],
  \label{eq:identification-interval}
\end{equation}
and if the gap envelopes $\underline{G}(R_i), \overline{G}(R_i)$ are
attained by admissible selectors, the interval is sharp; otherwise the
endpoints are approached arbitrarily closely by some choice of
within-family selectors $g_1 \in \mathcal{G}_{R_1}, g_2 \in \mathcal{G}_{R_2}$.
Under the unrestricted family-respecting class, the envelopes are
attained by the within-family oracle and the within-family worst-case
selector, so the interval $[\Apart - W_{R_1},\, \Apart + W_{R_2}]$ is
sharp without further assumption.
\end{theorem}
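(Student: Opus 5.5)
The plan is to reduce everything to Lemma~\ref{lem:absorption}, which gives $\Aetoe = \Apart - G_{g_1}(R_1) + G_{g_2}(R_2)$ for whatever within-family selectors $g_1 \in \mathcal{G}_{R_1}$, $g_2 \in \mathcal{G}_{R_2}$ are actually deployed. The reported quantities fix $\Apart$. They constrain the unknown gaps only through $\underline{G}(R_i) \le G_{g_i}(R_i) \le \overline{G}(R_i)$. The key structural fact is that the family selectors $\famhat$ of $R_1$ and $R_2$ are fixed, while the within-family selectors range independently over their classes, so the two gaps vary independently.

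\textbf{Step 1 (containment).} Since $g_i \in \mathcal{G}_{R_i}$, the envelopes give $\underline{G}(R_i) \le G_{g_i}(R_i) \le \overline{G}(R_i)$. Substituting into the absorption identity, $\Aetoe$ is minimised by taking the largest $G(R_1)$ and the smallest $G(R_2)$, and maximised by the reverse. This yields~\eqref{eq:identification-interval}.

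\textbf{Step 2 (sharpness).} The map $(g_1,g_2)\mapsto \Apart - G_{g_1}(R_1)+G_{g_2}(R_2)$ is separable. If the envelopes are attained, choosing $g_1$ attaining $\overline{G}(R_1)$ and $g_2$ attaining $\underline{G}(R_2)$ realises the lower endpoint; the swapped choice realises the upper endpoint. In both cases $\Spart(R_i)$ is unchanged, because partition scores depend only on $\famhat$. Hence the realised configurations are consistent with the report. Without attainment, pick $\varepsilon/2$-optimal selectors for each of the inf and sup, which puts the endpoints within $\varepsilon$.

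I would not claim every interior point is attained unless the classes are convex, for example under randomised mixtures. "Sharp" here means the endpoints are tight, which is all the statement requires.

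\textbf{Step 3 (unrestricted class).} Here one must show the envelopes are $0$ and $W_R$ and that both are attained by measurable selectors.
\begin{itemize}
\item \emph{Lower envelope.} $\underline{G}=0$ is attained by $g(x)=a^*_{\famhat(x)}(x)$, using Lemma~\ref{lem:gap-identity}.
\item \emph{Upper envelope.} For any $g$ the per-instance gap is at most $\max_{F_{\famhat(x)}} u - \min_{F_{\famhat(x)}} u$, so $\overline{G}\le W_R$. Equality holds for $g(x)\in\arg\min_{a\in F_{\famhat(x)}}u(x,a)$.
\end{itemize}
Plugging these into Step 1 gives $[\Apart - W_{R_1}, \Apart + W_{R_2}]$.

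The main obstacle is measurability of these argmax/argmin selectors. Because $\mathcal{A}$ is finite, I would break ties by a fixed ordering of $\mathcal{A}$. Then each selector is a finite composition of measurable comparisons of $u(\cdot,a)$, assuming each $u(\cdot,a)$ and $h_R$ are measurable. Integrability of $W_R$ is assumed implicitly by the finiteness of the scores.
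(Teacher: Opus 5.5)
Your proposal is correct and follows the paper's own proof. Both apply Lemma~\ref{lem:absorption} to an arbitrary pair $(g_1,g_2)$ and bound each gap independently by its envelope. Sharpness comes from attainment, or $\varepsilon$-near-attainment when the envelopes are not attained, and under the unrestricted class $0$ and $W_R$ are realised by the pointwise within-family argmax/argmin selectors, which are measurable because the pool is finite. Your extra remarks only make explicit what the paper leaves implicit: $\Spart(R_i)$ does not depend on the within-family selector, so the extremal configurations are consistent with the report, and a fixed tie-breaking order gives measurability.
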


\begin{proof}[Proof sketch]
Apply Lemma~\ref{lem:absorption} to $(g_1, g_2)$ and vary independently
over $\mathcal{G}_{R_i}$. Under the unrestricted class, $\underline{G}=0$
is attained by the within-family oracle and $\overline{G}=W_R$ by the
pointwise within-family worst-case selector. Full proof in
Appendix~\ref{app:proofs}.
\end{proof}

\begin{corollary}[No partition-only sign certificate]
\label{cor:no-sign-cert}
If the interval in~\eqref{eq:identification-interval} strictly crosses zero,
then the partition-level report (with envelopes) does not certify the
sign of $\Aetoe$: there exist admissible $(g_1, g_2)$ and
$(g_1', g_2')$ realising the same partition-level scores and envelopes
but $\Aetoe(g_1, g_2)$ and $\Aetoe(g_1', g_2')$ of opposite sign.
Conversely, if the interval is strictly positive (resp.\ strictly
negative), then $\Aetoe$ has the same sign as the interval for every
admissible $(g_1, g_2)$.
\end{corollary}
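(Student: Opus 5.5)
The plan is to read Corollary~\ref{cor:no-sign-cert} as a direct consequence of Theorem~\ref{thm:identification} combined with Lemma~\ref{lem:absorption}. The family selectors $h_{R_1}, h_{R_2}$ are held fixed, so $\Spart(R_1)$, $\Spart(R_2)$, and therefore $\Apart$, do not depend on which admissible within-family selectors are used. The envelopes $\underline{G}(R_i), \overline{G}(R_i)$ are defined from the classes $\mathcal{G}_{R_i}$ and are likewise independent of the particular choice of $(g_1,g_2)$. Consequently, every pair $(g_1,g_2)\in\mathcal{G}_{R_1}\times\mathcal{G}_{R_2}$ yields the same partition-level report including envelopes, and only $\Aetoe(g_1,g_2)=\Apart-G_{g_1}(R_1)+G_{g_2}(R_2)$ varies with the choice.

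\textbf{Converse direction (strict sign).} For any admissible pair we have $\underline{G}(R_i)\le G_{g_i}(R_i)\le\overline{G}(R_i)$. Substituting these bounds into the identity above places $\Aetoe(g_1,g_2)$ inside the interval~\eqref{eq:identification-interval}. If the interval is strictly positive, its lower endpoint is positive, so every $\Aetoe(g_1,g_2)>0$. The strictly negative case is symmetric: the upper endpoint is negative, so every $\Aetoe(g_1,g_2)<0$.

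\textbf{Forward direction (interval crosses zero).} Write $L$ and $U$ for the endpoints of the interval, and suppose it strictly crosses zero, so that $L<0<U$.
\begin{itemize}
\item By the sharpness or approximation clause of Theorem~\ref{thm:identification}, for every $\varepsilon>0$ there is an admissible pair $(g_1,g_2)$ with $\Aetoe(g_1,g_2)<L+\varepsilon$. This pair takes $g_1$ near the supremum of $G_{g}(R_1)$ and $g_2$ near the infimum of $G_{g}(R_2)$; the two choices can be made independently because the admissible set is a product.
\item Choosing $\varepsilon=-L>0$ gives $\Aetoe(g_1,g_2)<0$.
\item Symmetrically, taking $g_1'$ near $\underline{G}(R_1)$ and $g_2'$ near $\overline{G}(R_2)$, with $\varepsilon=U$, gives $\Aetoe(g_1',g_2')>0$.
\end{itemize}
Both pairs share the same reported quantities by the first paragraph, which yields the opposite-sign witnesses the corollary asserts.

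The only delicate point is that the endpoints need not be attained. This is why the argument uses strict crossing together with approximation to within $\varepsilon$, rather than evaluating exactly at the endpoints. It also relies on the realised gaps being able to approach the two envelopes \emph{simultaneously and independently}. I would justify this by noting that $G_{g_1}(R_1)$ depends only on $g_1$ and $G_{g_2}(R_2)$ depends only on $g_2$, so taking the infimum and supremum of their difference decouples into separate optimisations over each class. No further obstacle is expected.
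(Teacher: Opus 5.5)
Your proof is correct and takes essentially the paper's route. The paper gives no separate proof of this corollary; it treats it as immediate from the proof of Theorem~\ref{thm:identification}, which applies Lemma~\ref{lem:absorption} to an arbitrary pair $(g_1,g_2)$, varies the two selectors independently over $\mathcal{G}_{R_1}\times\mathcal{G}_{R_2}$, and invokes attainment or approximation of the envelopes. Your explicit $\varepsilon$-argument for non-attained endpoints, together with your check that $\Spart$ and the envelopes do not depend on the chosen pair, spells out what the paper leaves implicit.
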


\paragraph{Scope qualifier.} Theorem~\ref{thm:identification} is a
statement about \emph{partition-level reports}, not about the underlying
benchmark. It does not say the deployable winner is unknowable: once a
concrete within-family selector is fixed and evaluated, $\Aetoe$ is
just a number. What it does say is that the partition-level report,
even augmented with selected-family utility ranges, is insufficient as
a \emph{system-performance certificate} when its identification
interval strictly crosses zero. Equivalently: full transfer of the partition
conclusion requires either (i) reporting $\Setoe$ directly, which
collapses the interval to a point, or (ii) demonstrating empirically
(e.g.\ via the margin--regret diagnostic of
Theorem~\ref{thm:margin-regret} and \S\ref{sec:rq3}) that within-family
regret is small relative to the oracle margin on the relevant instances.

\paragraph{Reversal taxonomy.} Define the absorption ratio
$\rho := 1 - \Aetoe / \Apart = (G(R_1) - G(R_2))/\Apart$, well-defined
when $|\Apart| > 0$. We label transfer ($\rho \approx 0$), partial
absorption ($0 < \rho < 1$), extreme absorption ($\rho \approx 1$),
reversal ($\rho > 1$, $\Apart > 0$), and opposite reversal ($\Apart < 0$,
$\Aetoe > 0$); the full taxonomy table is in Appendix~\ref{app:taxonomy}.

\paragraph{Implication for empirical studies.} The lemmas turn
``\emph{does within-family suboptimality explain the gap?}'' and
``\emph{is partition advantage equal to deployable advantage?}'' from
statistical hypotheses into definitional identities. The two theorems
turn the questions ``\emph{when is partition-optimal family choice
deployment-stable?}'' and ``\emph{can a deployable winner be certified
from partition reports?}'' into testable per-instance and per-pair
conditions. \S\ref{sec:rq3} reports two diagnostics that operationalise
these conditions: a per-instance margin--regret violation rate
(Theorem~\ref{thm:margin-regret}) and a partition-only identification
interval together with its crosses-zero count
(Theorem~\ref{thm:identification}). Pointwise computation requires a
common valid-instance set per comparison; the empirical residual of
Lemma~\ref{lem:absorption} is below $10^{-15}$ on every benchmark
(Appendix~\ref{app:nan-handling}).

\section{Empirical Results}
\label{sec:empirical}

\subsection{Benchmarks and protocol}
\label{sec:protocol}

Table~\ref{tab:benchmarks} summarises the five benchmarks: three
tabular AutoML AS scenarios (B1--B3) and two ASlib
combinatorial-search scenarios (B4--B5). We use fixed matrix-style
resources because our estimands require paired algorithm-by-instance
utilities under a stable protocol; living benchmarks such as TabArena
\citep{erickson2025tabarena} are complementary. Family selectors are
multiclass RF, pairwise one-vs-one RF with soft voting, or a synthetic
singleton-family flat baseline. The default within-family selector is
a per-algorithm gradient-boosted regressor; \S\ref{sec:robustness}
sweeps three more classes. We run $5$ seeds $\times$ $5$-fold
stratified CV and convert performance to per-instance utilities in
$[0,1]$ (Appendix~\ref{app:transform}). B4--B5 family mappings follow
solver-competition reports \citep{li2009maxsat,hurley2014proteus} and
appear verbatim in Appendix~\ref{app:family-mapping}. PROTEUS-2014 is
partitioned semantically into CSP-native solvers plus three SAT
encoding families; its primary feature policy exposes only the ASlib
\texttt{csp} step ($36$ features), with an all-feature sensitivity in
Appendix~\ref{app:proteus-features}.

All diagnostics and CIs use the test instance as the unit: repeated
seed/fold predictions are aggregated per instance before
instance-cluster bootstrap (Appendix~\ref{app:nan-handling}).

\begin{table}[t]
\small
\centering
\caption{Benchmark registry. Native metrics are converted to
per-instance utility (App.~\ref{app:transform}). B5 retains the common
valid-instance set after dropping all-timeout rows from $4021$ raw
PROTEUS-2014 instances.}
\label{tab:benchmarks}
\begin{tabular}{llrrrll}
\toprule
ID & Benchmark        & \#inst.\ & \#alg.\ & \#fam.\ & Domain & Native metric \\
\midrule
B1 & TabZilla         & 133 & 22 & 4 & tabular ML            & accuracy \\
B2 & TabRepo (cls.)   & 202 & 11 & 4 & tabular ML            & 1$-$error rate \\
B3 & TALENT (cls.)    & 200 & 34 & 4 & tabular ML            & accuracy \\
B4 & MAXSAT-PMS-2016  & 556 & 19 & 4 & combinatorial search  & PAR10 \\
B5 & PROTEUS-2014     & 3565 & 22 & 4 & combinatorial CSP/SAT & runtime \\
\bottomrule
\end{tabular}
\end{table}

\subsection{RQ1: Do partition scores overstate deployable performance?}
\label{sec:rq1}

\begin{figure}[t]
\centering
\includegraphics[width=0.95\linewidth]{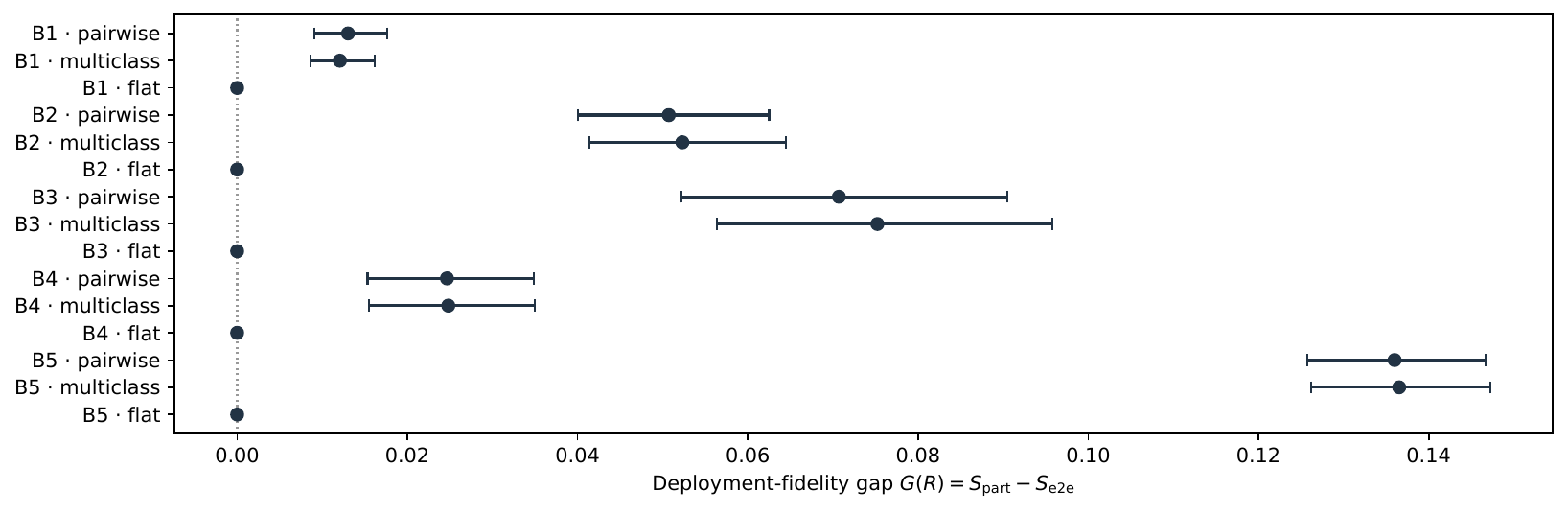}
\caption{Deployment-fidelity gap $G(R) = \Spart - \Setoe$ for every
(benchmark, decomposition) cell. Bars show $95\%$ percentile-bootstrap
confidence intervals on the per-instance gap. All decomposed pipelines
exhibit positive gaps; flat baselines are included as controls. The
singleton-flat invariant enforces $G(F)=0$ for every flat baseline by
using identical missing-value fallback on partition and end-to-end
singleton scores.}
\label{fig:forest}
\end{figure}

Figure~\ref{fig:forest} plots $G(R)$ across all $15$ cells. Flat
baselines are controls; the main claim concerns pairwise and
multiclass decomposed pipelines, all with positive gaps and CIs
excluding zero. Gaps range from $0.012$ on TabZilla (B1) to
approximately $0.13$ on PROTEUS-2014 (B5). Paired Wilcoxon tests yield
$p < 10^{-15}$ on every
decomposed cell; full $p$-values are in Appendix~\ref{app:full-tables}.

\subsection{RQ2: When do partition-level conclusions transfer?}
\label{sec:rq2}

For each benchmark we compare pairwise OVO vs.\ flat, multiclass
softmax vs.\ flat, and pairwise vs.\ multiclass. The $15$ pairs split
into \emph{decomp-vs-flat} deployment decisions (should we decompose?)
and \emph{decomp-vs-decomp} outer-selector comparisons; the regimes
behave differently, so we report them separately.

\subsubsection{Decomp-vs-flat: is decomposition deployment-safe?}
\label{sec:rq2-flat}

Among the $10$ decomp-vs-flat comparisons in Table~\ref{tab:transfer},
$4$ have sign-changing point estimates: B2 and B4 fail the safety condition
$\Apart > \Delta G$ (equal to $\Apart > G(R)$ here because every flat
baseline has $G(F)=0$), so partition scores prefer decomposition while
end-to-end point estimates prefer flat selection. One B2 cell and the
two B4 cells have small end-to-end margins (B2 pw: $-0.0037$; B4:
$-0.0047$ and $-0.0031$), so we treat them as small-margin sign changes
rather than large-margin reversals. B1 is a
high-absorption case:
the partition advantage remains positive end-to-end but shrinks to
$1$--$2$ utility points ($\rho \approx 0.84$--$0.91$). B3 and
PROTEUS-2014 (B5) are partially absorbed; B5's $33$ pp partition
advantage shrinks to $20$ pp end-to-end ($\rho \approx 0.41$).
Supporting plots are in Appendix~\ref{app:full-tables}.

\subsubsection{Decomp-vs-decomp: do outer selectors differ?}
\label{sec:rq2-outer}

The remaining $5$ pairs (pairwise vs.\ multiclass) all have
$|\Apart| < 5\times 10^{-3}$ (Appendix~\ref{app:full-tables}); the
outer-selector choice is empirically secondary to whether the score
includes the deployable within-family selector.

\begin{table}[t]
\small
\centering
\caption{Decomp-vs-flat advantage transfer. ``pw'' = pairwise OVO,
``mc'' = multiclass softmax. $\rho > 1$ marks sign change relative to
the partition-level advantage; small end-to-end margins should be read
as point estimates rather than large-margin reversals. All quantities
are recomputed on the four-way intersection of finite values for the
specific pair, so the gap values in this table need not match the
per-cell marginal gaps in Table~\ref{tab:exp2-full}; on each pair,
$\Apart - \Aetoe = G(R_1) - G(R_2)$ holds exactly
(Lemma~\ref{lem:absorption}).
Pairwise vs.\ multiclass comparisons all have $|\Apart| < 5\times 10^{-3}$
and are reported in Appendix~\ref{app:full-tables} (Table~\ref{tab:exp3-full}).}
\label{tab:transfer}
\begin{tabular}{llrrrrrl}
\toprule
ID & Pair & $\Apart$ & $\Aetoe$ & $G(R_1)$ & $G(R_2)$ & $\rho$ & Regime \\
\midrule
B1 & mc vs flat & $+0.0145$ & $+0.0024$ & $0.0121$ & $0.0000$ & $0.835$ & high absorption \\
B1 & pw vs flat & $+0.0143$ & $+0.0013$ & $0.0130$ & $0.0000$ & $0.912$ & high absorption \\
B2 & mc vs flat & $+0.0446$ & $-0.0077$ & $0.0523$ & $0.0000$ & $1.172$ & \textbf{sign change} \\
B2 & pw vs flat & $+0.0471$ & $-0.0037$ & $0.0507$ & $0.0000$ & $1.078$ & \textbf{small sign change} \\
B3 & mc vs flat & $+0.1069$ & $+0.0316$ & $0.0752$ & $0.0000$ & $0.704$ & partial absorption \\
B3 & pw vs flat & $+0.1079$ & $+0.0372$ & $0.0707$ & $0.0000$ & $0.655$ & partial absorption \\
B4 & mc vs flat & $+0.0202$ & $-0.0047$ & $0.0248$ & $0.0000$ & $1.231$ & \textbf{small sign change} \\
B4 & pw vs flat & $+0.0216$ & $-0.0031$ & $0.0247$ & $0.0000$ & $1.142$ & \textbf{small sign change} \\
B5 & mc vs flat & $+0.3285$ & $+0.1919$ & $0.1365$ & $0.0000$ & $0.416$ & partial absorption \\
B5 & pw vs flat & $+0.3333$ & $+0.1973$ & $0.1360$ & $0.0000$ & $0.408$ & partial absorption \\
\bottomrule
\end{tabular}
\end{table}

\paragraph{B5 robustness checks.}
Because PROTEUS-2014 gives the largest gap, we test two design choices.
First, giving the selector all $198$ features (including
encoding-conditional features unavailable before choosing an encoding)
improves deployment but leaves $G>0$: pairwise $0.134 \to 0.100$,
multiclass $0.135 \to 0.098$. Second, shortening the runtime cutoff
from $3600$\,s to $1800$\,s leaves the gap essentially unchanged
(pairwise $0.134 \to 0.134$, multiclass $0.135 \to 0.136$). Thus the
B5 headline effect is not an artefact of feature availability or
timeout convention; full numbers are in
Appendix~\ref{app:proteus-features}.

\subsection{RQ3: Theorem 1/2 diagnostics}
\label{sec:rq3}

Lemma~\ref{lem:gap-identity}/\ref{lem:absorption} residuals are at
machine precision (Appendix~\ref{app:identity-residuals}), so we focus
on the diagnostics induced by Theorems~\ref{thm:margin-regret}
and~\ref{thm:identification}.

\paragraph{Theorem 1 diagnostic: per-instance margin--regret stability.}
For each instance $x$ we form
$\mathrm{viol}(x) = \mathbf{1}\{\exists q \ne s : r_s(x) - r_q(x) >
m_s(x) - m_q(x)\}$ for the selected family $s=\famhat(x)$. A violation
means a competitor has higher deployable utility because its lower
oracle margin is offset by lower within-family regret. The existential
rate ranges $0.23$--$0.56$ for pairwise and multiclass pipelines
(Table~\ref{tab:margin-regret-rates}); flat-baseline rates are larger
($0.49$--$0.67$) because they count any singleton beating the
single-best-on-train algorithm.

\paragraph{Theorem 2 diagnostic: partition-only identification interval.}
For each comparison we report
$[\Apart - W_{R_1}, \Apart + W_{R_2}]$, with $W_R$ the average
selected-family utility range. All $15$ intervals strictly cross zero,
including B5 ($\Apart = +0.33$, $W_R \approx 0.75$), so partition
reports plus family ranges do \emph{not} certify the deployable
winner. This interval is a conservative certificate, not a predictor;
direct $\Setoe$ reporting remains necessary
(Table~\ref{tab:interval-summary}, Appendix~\ref{app:full-tables}).

\subsection{RQ4: Robustness to within-family selector class}
\label{sec:robustness}

Across within-family selector classes, $G(R)$ varies $1.28\times$ to
$2.23\times$ by benchmark, while family-selector class changes
$\Spart$ and $G$ negligibly. In the main B4--B5 sweep, the largest
PROTEUS-2014 gap is $G=0.211$, and even the best class leaves positive
B4--B5 gaps
(Appendix~\ref{app:exp5-b4b5}); reports must specify the
within-family selector.

\paragraph{Inner-selector stress test.} Sweeping RF, XGBoost,
LightGBM, MLP, and tuned GBDT within-family regressors does not
collapse $G(R)$: the best realistic selector preserves $87$--$100\%$
of the default on four of five benchmarks (RF on B1 is the lone
sub-$80\%$ outlier). A targeted rerun on B2/B4 with RF and LightGBM
within-family selectors keeps $G(R)$ positive in every cell
($0.046$--$0.049$ on B2 and $0.024$--$0.027$ on B4). The strict sign
of the small B2/B4 end-to-end margins is selector-sensitive, but no
stronger selector produces a reliable positive decomp-vs-flat
advantage. B1 remains a high-absorption case, and a B5 rerun with RF
keeps PROTEUS-2014 at $\rho \approx 0.41$
(Appendix~\ref{app:spectrum}). The gap is not a weak-inner-selector
artifact, although exhaustive HPO or task-specific meta-learners could
still reduce it and must be evaluated end-to-end.

\section{Correction and Practical Guidance}
\label{sec:guidance}

\paragraph{Gap-corrected reporting.} Estimate $G$ on held-out folds
and correct partition advantage by the gap difference,
$\widehat{A}_{\mathrm{corr}} = \widehat{A}_{\mathrm{part}}^{\mathrm{test}} - [\widehat{G}^{\mathrm{val}}(R_1) - \widehat{G}^{\mathrm{val}}(R_2)]$.
Using validation folds drawn only from the outer training split,
$\widehat{A}_{\mathrm{corr}}$ reduces MAE by $25$--$88\%$ on B2--B5
decomp-vs-flat cells and recovers the point-estimate deployable sign on
all four sign-changing cells (B2/B4). On B1, where the true end-to-end
advantage is small but positive, correction can over-subtract and add
variance. Split-sensitivity reruns on B2/B4 with $3$, $5$, and $10$
training-side validation folds, and two additional validation split
seeds at the $5$-fold setting, recover the same four point-estimate
signs; MAE reduction is weaker on the most granular validation split,
especially for B4 (Appendix~\ref{app:correction}). The diagnostic also
assumes the training-side validation split is representative of the
outer test distribution; under covariate or task-distribution shift it
may under- or over-correct. It does not replace direct end-to-end
evaluation.

\paragraph{Deployment-aware family selector.} Training the outer
selector on cross-fitted deployable utility instead of
$\max_{a \in F_k}u$ shrinks $G(R)$ on every benchmark and improves
$\Setoe$ on B1--B4; on B5 it shrinks $G$ by $0.008$ with a small
$\Setoe$ trade-off ($-0.005$, Appendix~\ref{app:deployment-aware}).

\paragraph{Reporting checklist.} A decomposed-AS report should pair
$\Spart, \Setoe, G(R)$ with $\Apart, \Aetoe, \rho$ and the
identification interval, and document the within-family selector,
family mapping, feature policy, and common valid-instance set
(Appendix~\ref{app:guidance}).

\section{Limitations and Conclusion}
\label{sec:limitations}

$G(R)$ is mapping-dependent (Appendix~\ref{app:family-sensitivity})
and pipeline-specific; B5's headline effect survives the
feature-policy and cutoff sensitivities of \S\ref{sec:rq2-flat}.
Continuous black-box optimisation \citep{petelin2025pitfalls} is out
of scope. Stronger within-family models can move small end-to-end
margins (Appendix~\ref{app:spectrum}), and exhaustive HPO or
domain-specific meta-learners may shrink $G(R)$ further; such
improvements must still be reported end-to-end. The validation
gap-correction diagnostic assumes training-side validation is
representative of the test distribution and is not evaluated under
explicit covariate shift. Partition scores upper-bound family quality
but are not system scores: the safety condition fails on $4/10$ cells and every
identification interval strictly crosses zero, so $\Spart, \Setoe, G(R)$, and
the interval should be reported together.
\textbf{Broader impact.} This evaluation-methodology study (no
human-subject data, no new deployed system) reduces misleading
AS/AutoML reporting that can overstate deployable performance; its
main negative impact is over-certification --- our diagnostics flag
when end-to-end evaluation is needed, not deployment-safety
guarantees, and should accompany $\Setoe$ and task-specific risk
assessment.

\clearpage
\bibliographystyle{plainnat}
\bibliography{paper}

\clearpage
\appendix
\section*{Appendix}

\noindent The appendix contains: (A) the per-instance utility
transform; (B) full algorithm-family mappings for all five benchmarks;
(C) the synthetic toy example used in \S\ref{sec:algebra}; (D) the
reversal taxonomy table; (E) the full reporting checklist and decision
framework; (F) full proofs of Lemmas 1--2 and Theorems 1--2; (G)
common-instance evaluation and identity-residual analysis; (H)
pseudocode for the decomposed pipeline evaluator; (I) gap-corrected
reporting; (J) family-mapping sensitivity (B4 only; B5 reports the
primary semantic partition); (K) deployment-aware family selector;
(L) PROTEUS-2014 feature-policy and cutoff sensitivities; (M)
metric-transform sensitivity on B4--B5; (N) the within-family selector
robustness sweep with both the per-benchmark figure and the full
$24$-row B1--B3 table; (O) full numerical tables for the gap and
advantage-transfer analyses; (P) statistical methodology; (Q) compute
resources and reproducibility instructions; (R) a literature audit of
oracle-style reporting in algorithm selection, solver portfolios,
encoding selection, and tabular benchmarking; (S) full per-cell
results of the inner-selector stress test of \S\ref{sec:robustness}.

\section{Utility transformation}
\label{app:transform}

For B1--B3 the native metric is an accuracy-like value already in
$[0,1]$ and higher-is-better; we use it directly as utility. For B4
(PAR10) and B5 (runtime; PAR10 reconstructed from \texttt{runstatus}),
we apply a per-instance min-max transformation
\begin{equation}
  u(x, a) \;=\; 1 \;-\; \mathrm{clip}\!\left(\frac{p(x,a) - p_{\min}(x)}{p_{\max}(x) - p_{\min}(x)},\; 0,\; 1\right),
\end{equation}
where $p(x, a)$ is the native lower-is-better measurement (PAR10 in
seconds). Instances on which all algorithms tie at the timeout penalty
are dropped (these carry no AS signal). More generally, the transform
is applied only when $p_{\max}(x) > p_{\min}(x)$; if
$p_{\max}(x)=p_{\min}(x)$, the row has zero within-instance range and is
mapped to all-NaN before the common valid-instance filter. After
dropping degenerate rows, B4 retains
$556$ of $601$ instances and B5 retains $3565$ of the $4021$ raw
PROTEUS-2014 instances. Per-instance min-max preserves rankings and is
the standard convention adopted by ASlib analyses; under this
convention any instance with at least one strictly-better-than-worst
algorithm contributes a non-degenerate utility range.

For B5, where \texttt{runstatus}~$\neq$~\texttt{ok} (timeout, memout,
crash) or where \texttt{ok} runtime exceeds the cutoff, we apply the
standard PAR10 penalty $p(x, a) := 10 \cdot p_{\mathrm{cutoff}}$ before
the per-instance transform. The cutoff
$p_{\mathrm{cutoff}} = 3600$\,s is taken from the scenario's
\texttt{description.txt} (\texttt{algorithm\_cutoff\_time: 3600}); the
$1800$\,s cutoff sensitivity is reported in
Appendix~\ref{app:proteus-cutoff} and is not used for headline claims.

\section{Algorithm-family mappings}
\label{app:family-mapping}

\paragraph{B1 TabZilla \citep{mcelfresh2023tabzilla}.}
\textsc{TreeEnsemble}: CatBoost, LightGBM, XGBoost, RandomForest.
\textsc{Classic}: LinearModel, SVM, KNN, DecisionTree.
\textsc{Pretrained}: TabPFN.
\textsc{Deep}: MLP, rtdl\_MLP, rtdl\_ResNet, rtdl\_FTTransformer, SAINT,
TabTransformer, NAM, NODE, STG, TabNet, VIME, DANet, DeepFM.

\paragraph{B2 TabRepo \citep{salinas2024tabrepo}.}
\textsc{TreeEnsemble}: CAT, GBM, XGB, RF, XT.
\textsc{Classic}: KNN, LR.
\textsc{Deep}: FASTAI, FT\_TRANSFORMER, NN\_TORCH.
\textsc{Pretrained}: TABPFN.

\paragraph{B3 TALENT \citep{ye2024talent}.}
\textsc{TreeEnsemble}: catboost, lightgbm, RandomForest, xgboost.
\textsc{Classic}: knn, LogReg, NCM, NaiveBayes, svm.
\textsc{Pretrained}: tabpfn, tabpfn\_v2, tabpfn\_v2\_5, tabicl\_v2.
\textsc{Deep}: autoint, danets, dcn2, excelformer, ftt, grownet,
mlp, mlp\_plr, modernNCA, node, ptarl, ptaul, realmlp, resnet, snn,
switchtab, tabcaps, tabnet, tabr, tabtransformer, tangos.
The dataset's constant-reference baseline column is family-skipped (assigned
to no family; unreachable through the decomposed pipeline).

\paragraph{B4 ASlib MAXSAT-PMS-2016 \citep{li2009maxsat}.}
\textsc{BnB} (branch-and-bound): WMaxSatz09, WMaxSatz+, ahms-1.70.
\textsc{CoreGuided} (SAT-based core extraction; OLL-style): Open-WBO15,
Open-WBO16, mscg2015a, mscg2015b, maxino16-c10, maxino16-dis,
Naps-1.02-ms, Optiriss6, QMaxSAT14, QMaxSAT16UC, WPM3-2015-co.
\textsc{IHS} (implicit hitting set; hybrid SAT$+$MIP): maxhs-b,
LMHS-2016. \textsc{SLS} (stochastic local search; incl. SLS$+$BnB
hybrids): CCEHC2akms, CCLS2akms, ahms-ls-1.70.

\paragraph{B5 ASlib PROTEUS-2014 \citep{hurley2014proteus}.}
The PROTEUS portfolio decides between solving a CSP instance natively
or transforming it via one of three CSP-to-SAT encodings (direct,
support, direct-order) and dispatching to a SAT solver. The four
semantic families on the $22$ algorithms are:
\textsc{CSP-native}: abscon, choco, gecode, mistral\_nj.
\textsc{SAT-direct}: claspcnf\_direct, cryptominisat\_direct,
glucose\_direct, lingeling\_direct, minisat22\_direct, riss3g\_direct.
\textsc{SAT-support}: claspcnf\_support, cryptominisat\_support,
glucose\_support, lingeling\_support, minisat22\_support,
riss3g\_support.
\textsc{SAT-direct-order}: claspcnf\_directorder,
cryptominisat\_directorder, glucose\_directorder,
lingeling\_directorder, minisat22\_directorder, riss3g\_directorder.

The mapping is deterministic from the algorithm string suffix and
matches the original Proteus paper's hierarchy: CSP-native solvers
appear as one family, and the SAT-side $3 \times 6$ grid of
encodings $\times$ SAT solvers becomes three families of six.

\section{Synthetic toy example}
\label{app:toy}

The toy used in \S\ref{sec:algebra} consists of $n = 100$ synthetic
instances split evenly into two halves indexed by $h \in \{0,1\}$.
Algorithm utilities are
\begin{align*}
u(x \mid h\!=\!0) &= [1.0,\; 0.0,\; 0.5,\; 0.4] + \varepsilon, \\
u(x \mid h\!=\!1) &= [0.0,\; 1.0,\; 0.4,\; 0.5] + \varepsilon,
\end{align*}
with $\varepsilon \sim \mathcal{N}(0, 0.01^2)^4$ i.i.d.\ across instances
and algorithms. Pipeline $R_1$ uses the partition
$\Pi_1 = \{\{a_0,a_1\},\{a_2,a_3\}\}$ with an oracle family selector
($\hat{k}_{R_1}(x)$ always equals the family containing the optimal
algorithm) but a within-family selector that flips with probability
$\tau_1 = 0.5$ between the two members of the chosen family. Pipeline
$R_2$ uses $\Pi_2 = \{\{a_0,a_2\},\{a_1,a_3\}\}$ with family-selector
error rate $0.4$ and a nearly clean within-family selector
(flip rate $\tau_2 = 0.05$).

The toy construction yields
$\Spart(R_1) \approx 0.996$,
$\Spart(R_2) \approx 0.792$,
$\Setoe(R_1) \approx 0.520$,
$\Setoe(R_2) \approx 0.765$, hence
$\Apart \approx +0.20$, $\Aetoe \approx -0.24$, and $\rho \approx 2.20$.
The identity residual
$|\Apart - \Aetoe - (G(R_1) - G(R_2))|$ is below $10^{-15}$ for any
seed, illustrating Lemma~\ref{lem:absorption} numerically. This
construction is also the toy invoked at the end of
\S\ref{sec:margin-regret}: $R_1$ has the larger oracle margin but a
strictly larger within-family regret on the conflicted instances. This
is the mechanism behind the failure, but the formal cross-pipeline
sign change is the deployment-safety failure in
Corollary~\ref{cor:flat-safety}: $\Apart < G(R_1)-G(R_2)$, hence
$\Aetoe < 0$.

\section{Reversal taxonomy}
\label{app:taxonomy}

The full qualitative taxonomy of how a partition-level conclusion
transfers end-to-end, derived from~\eqref{eq:absorption}:
\begin{center}
\small
\begin{tabular}{lll}
\toprule
Regime & Algebraic condition & Practical reading \\
\midrule
Full transfer        & $G(R_1) - G(R_2) \approx 0$        & partition conclusion preserved \\
Amplification        & $\rho < 0$                         & advantage grows and sign holds \\
Partial absorption   & $0 < \rho < 1$                     & advantage shrinks but sign holds \\
Extreme absorption   & $\rho \approx 1$                   & advantage almost erased \\
Reversal             & $\rho > 1$, $\Apart > 0$           & partition winner becomes e2e loser \\
Opposite reversal    & $\Apart < 0$, $\Aetoe > 0$         & partition loser becomes e2e winner \\
\bottomrule
\end{tabular}
\end{center}
Reversal and opposite reversal are the two sign-reversal regimes; our
empirical study observes standard-reversal point estimates on two
benchmarks but does not observe opposite reversals once
$\Spart$/$\Setoe$/$G$ are reported on a common instance set.
Amplification is mathematically possible when the partition-level
winner has a smaller deployment-fidelity gap than its comparator; it
is not needed for the headline empirical claims.

\section{Reporting checklist and decision framework}
\label{app:guidance}

\paragraph{Decision framework.} Table~\ref{tab:decisions} maps research
questions to recommended metrics. Partition score is appropriate when
the question is about the family partition itself; it is insufficient
when the question is about a deployable system or about comparing two
decomposed methods.

\begin{table}[t]
\small
\centering
\caption{Metric-reporting decision framework grounded in our findings.}
\label{tab:decisions}
\begin{tabularx}{\linewidth}{@{}>{\raggedright\arraybackslash}p{0.36\linewidth}>{\raggedright\arraybackslash}p{0.18\linewidth}>{\raggedright\arraybackslash}X@{}}
\toprule
Research question & Metric & Why \\
\midrule
Is the family partition useful in principle? &
  $\Spart$ + VBS upper bound &
  measures whether the selected family contains strong algorithms \\
Is the system deployable? &
  $\Setoe$ &
  deployment uses a learned within-family selector, not an oracle \\
Does method $R_1$ beat $R_2$? &
  both $\Apart$ and $\Aetoe$ &
  B2/B4 show sign-changing point estimates; B1/B5 show strong absorption \\
Is the within-family selector adequate? &
  $G(R)$ with CI &
  Lemma~\ref{lem:gap-identity}: $G$ equals within-family regret \\
Is partition advantage reliable? &
  $\rho$ &
  quantifies absorption / reversal \\
\bottomrule
\end{tabularx}
\end{table}

\paragraph{Full reporting checklist.} A decomposed-AS report should
include: (i) $\Spart$ and $\Setoe$ side by side; (ii) $G(R)$ with
bootstrap CI; (iii) for comparisons, $\Apart, \Aetoe, \rho$, with
$\rho$ marked unstable when $|\Apart| < 5\times 10^{-3}$;
(iv) the within-family selector class and any oracle used;
(v) the family mapping \emph{and the partition's semantic basis} ---
paradigm, encoding, lineage, random, or cluster --- so that readers can
judge whether the partition matches the deployment hierarchy;
(vi) the \emph{feature availability policy} --- whether the family
selector uses features that are only computable after a representation
or encoding choice has been made;
(vii) for runtime scenarios, the timeout, PAR10 convention, and
missing-value handling;
(viii) the common valid-instance set used per comparison and the
empirical residuals of Lemmas~\ref{lem:gap-identity}--\ref{lem:absorption}
as a consistency check;
(ix) for every method comparison, the \emph{partition-only
identification interval} $[\Apart - W_{R_1}, \Apart + W_{R_2}]$ from
Theorem~\ref{thm:identification} and whether it strictly crosses zero ---
the partition-level report does not certify a deployable winner unless
the interval is strictly signed.

\section{Proofs}
\label{app:proofs}

\paragraph{Lemma~\ref{lem:gap-identity} (gap identity).}
Substituting~\eqref{eq:spart} and~\eqref{eq:se2e} into the definition of
$G(R)$ and using linearity of expectation gives the first equality. For
each $x$, $a_{\famhat(x)}^{*}(x)$ is by definition the maximiser of
$u(x,\cdot)$ over $F_{\famhat(x)}$, and $g_{R,\famhat(x)}(x) \in
F_{\famhat(x)}$, so the pointwise difference is nonnegative. Equality
in expectation forces equality almost surely. \qed

\paragraph{Lemma~\ref{lem:absorption} (advantage absorption).}
By definition $\Setoe(R) = \Spart(R) - G(R)$. Substituting into
$\Aetoe = \Setoe(R_1) - \Setoe(R_2)$ and rearranging yields
\eqref{eq:absorption}. \qed

\paragraph{Corollary~\ref{cor:flat-safety} (general deployment-safety).}
By Lemma~\ref{lem:absorption},
$\Aetoe = \Apart - [G(R) - G(F)] = \Apart - \Delta G$, so
$\Aetoe > 0 \iff \Apart > \Delta G$. Setting $G(F) = 0$ recovers the
canonical form. \qed

\paragraph{Theorem~\ref{thm:margin-regret} (margin--regret stability).}
$v_q(x) = m_q(x) - r_q(x)$ for every family $q$, so
$v_q(x) - v_s(x) = (m_q - m_s) - (r_q - r_s)$. Maximising over $q$
gives the regret expression; non-negativity of regret yields the
equivalence in~\eqref{eq:stability-condition}. \qed

\paragraph{Theorem~\ref{thm:identification} (partition-only identification).}
By Lemma~\ref{lem:absorption} applied to any $(g_1, g_2)$,
$\Aetoe(g_1, g_2) = \Apart - (G_{g_1}(R_1) - G_{g_2}(R_2))$.
Independently varying $g_1, g_2$ over $\mathcal{G}_{R_i}$ bounds
$G_{g_i}(R_i)$ by $[\underline{G}(R_i), \overline{G}(R_i)]$, so
$\Aetoe$ is bounded by the interval in~\eqref{eq:identification-interval};
attainability of each envelope yields sharpness, otherwise the bound is
approached but not attained. Under the unrestricted family-respecting
class, $\underline{G}(R) = 0$ is attained by the within-family oracle
selector $g(x) \in \arg\max_{a \in F_{\famhat(x)}} u(x, a)$ and
$\overline{G}(R) = W_R$ is attained by the pointwise within-family
worst-case selector $g(x) \in \arg\min_{a \in F_{\famhat(x)}} u(x, a)$;
both are measurable on a finite algorithm pool. \qed

\section{Common-instance evaluation and identity residuals}
\label{app:nan-handling}\label{app:identity-residuals}

The identities in Lemmas~\ref{lem:gap-identity}--\ref{lem:absorption}
hold pointwise. For each (benchmark, pair) we therefore compute
$\Spart(R_1)$, $\Setoe(R_1)$, $\Spart(R_2)$, $\Setoe(R_2)$ on the
\emph{four-way intersection} of finite values, i.e.\ the test instances
on which both pipelines have a finite partition oracle and a finite
end-to-end utility; the resulting empirical residuals
$|\Apart - \Aetoe - (G(R_1) - G(R_2))|$ are at machine precision
($\le 5 \times 10^{-16}$) on all five benchmarks.

Figure~\ref{fig:identity} visualises the two identities as a sanity
check. The left panel plots within-family regret on the predicted
family against $G(R)$: by Lemma~\ref{lem:gap-identity} the two are
equal in expectation, so all benchmarks lie on $y=x$. The right panel
plots $\Apart - \Aetoe$ versus $G(R_1) - G(R_2)$ for the
pairwise-vs-multiclass comparison on every benchmark; by
Lemma~\ref{lem:absorption} all points lie on the diagonal (residuals
$\le 5\times 10^{-16}$). These plots are not statistical fits but
empirical realisations of the lemmas on per-benchmark estimates.

\begin{figure}[t]
\centering
\begin{minipage}{0.49\linewidth}
\includegraphics[width=\linewidth]{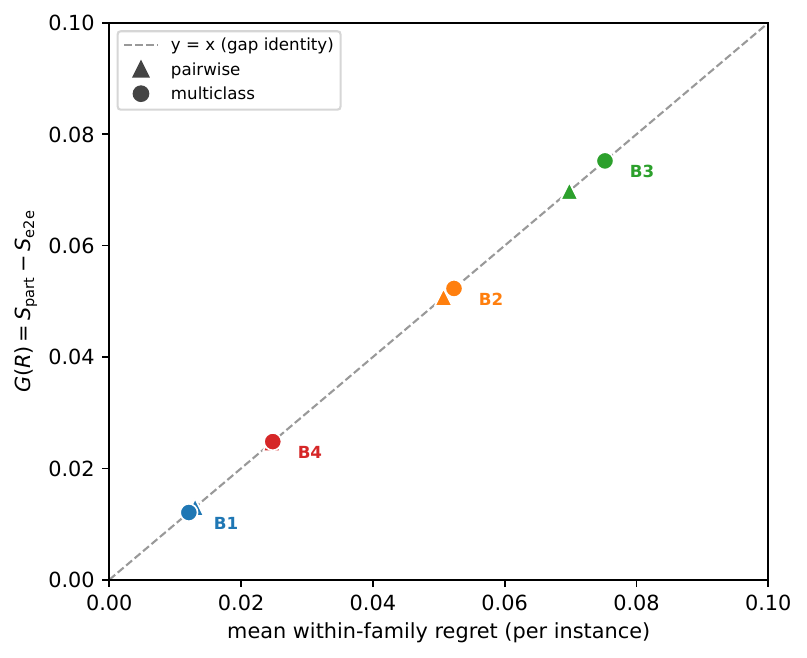}
\end{minipage}\hfill
\begin{minipage}{0.49\linewidth}
\includegraphics[width=\linewidth]{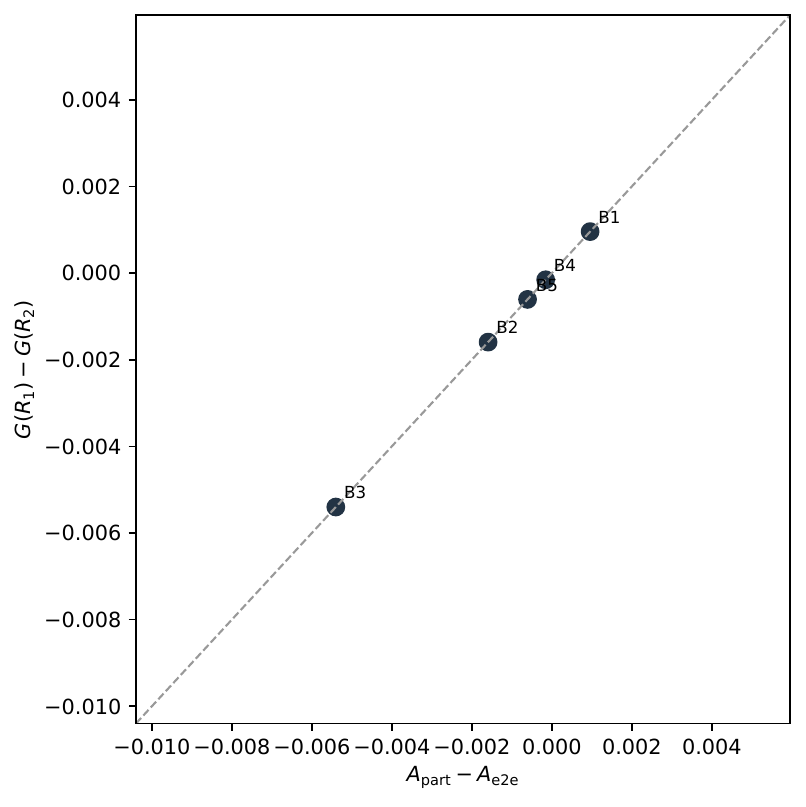}
\end{minipage}
\caption{\emph{Left:} per-benchmark within-family regret on the
predicted family against $G(R)$ (Lemma~\ref{lem:gap-identity}).
\emph{Right:} $\Apart - \Aetoe$ versus $G(R_1) - G(R_2)$ for the
pairwise-vs-multiclass comparison on every benchmark
(Lemma~\ref{lem:absorption}; residuals $\le 5\times 10^{-16}$).}
\label{fig:identity}
\end{figure}

This common-instance protocol is essential: without it, marginal
means would average over different row sets whenever a pipeline's
partition oracle is undefined on some test instances while the
deployable selector falls back to a penalty value, and apparent
apparent sign changes can be induced by aggregation alone. For TALENT (B3),
using independent valid masks can create marginal artefacts. We
therefore compute each comparison on a common finite utility mask and
treat common-instance evaluation as part of the reporting checklist
(\S\ref{sec:guidance}).

Flat baselines are singleton-family controls. For these controls the
evaluator enforces the row-wise invariant
$u_{\mathrm{part}}(x)=u_{\mathrm{e2e}}(x)$ after missing-value fallback:
the same selected algorithm and the same per-instance penalty are used
for both scores. Consequently all flat rows have $G(F)=0$, including
TALENT (B3); any nonzero singleton-flat gap would indicate asymmetric
NaN handling rather than a within-family oracle effect.

\section{Decomposed pipeline pseudocode}
\label{app:pseudocode}

The protocol used to compute $\Spart$, $\Setoe$, and $G(R)$ in
\S\ref{sec:empirical} is summarised below.
Stratification keys are the per-instance argmax over $\mathcal{A}$,
with rare classes collapsed to a synthetic \texttt{\_\_other\_\_}
class to avoid singleton folds. The within-family selector is fit
per family using the family-restricted columns of $M$ on the
training rows.

\begin{quote}\small
\textbf{Algorithm: Decomposed pipeline evaluator (one seed pass; the outer driver iterates this over $5$ seeds and averages).}\\
\textbf{Input.} Utility matrix $M \in \mathbb{R}^{n \times A}$,
feature matrix $Z \in \mathbb{R}^{n \times d}$, family partition
$\Pi = \{F_1,\dots,F_K\}$, $K_{\mathrm{cv}}$-fold splits
$\{(\mathrm{tr}_t, \mathrm{te}_t)\}_{t=1}^{K_{\mathrm{cv}}}$.\\
\textbf{Output.} Per-instance utility vectors
$u_{\mathrm{part}}, u_{\mathrm{e2e}} \in \mathbb{R}^n$.

\begin{enumerate}
  \itemsep0em
  \item Initialise $u_{\mathrm{part}} \leftarrow \mathbf{0}_n$, $u_{\mathrm{e2e}} \leftarrow \mathbf{0}_n$.
  \item For each fold $t = 1,\dots,K_{\mathrm{cv}}$:
    \begin{enumerate}
      \itemsep0em
      \item Fit family selector $h_R$ on $(Z[\mathrm{tr}_t],\, M[\mathrm{tr}_t],\, \Pi)$.
      \item For each family $k$: fit within-family selector $g_{R,k}$ on $(Z[\mathrm{tr}_t],\, M[\mathrm{tr}_t, F_k])$.
      \item For each $i \in \mathrm{te}_t$: set $\hat{k} \leftarrow h_R(Z[i])$, $\hat{a} \leftarrow g_{R, \hat{k}}(Z[i])$, $u_{\mathrm{part}}[i] \leftarrow \max_{a \in F_{\hat{k}}} M[i, a]$ (penalty fallback if no member is finite), and $u_{\mathrm{e2e}}[i] \leftarrow M[i, \hat{a}]$ (the same per-instance penalty if missing). For singleton flat baselines, tie the two assignments after fallback so $u_{\mathrm{part}}[i]=u_{\mathrm{e2e}}[i]$.
    \end{enumerate}
  \item \textbf{Return} $(u_{\mathrm{part}}, u_{\mathrm{e2e}})$.
\end{enumerate}
\end{quote}

The evaluation runs the algorithm above for each of $5$
seeds, averages per-instance utilities across seeds, and reports
$\Spart = \mathrm{mean}(u_{\mathrm{part}})$,
$\Setoe = \mathrm{mean}(u_{\mathrm{e2e}})$, and
$G = \mathrm{mean}(u_{\mathrm{part}} - u_{\mathrm{e2e}})$ along with
$95\%$ percentile bootstrap CIs over per-instance differences.

\section{Gap-corrected reporting}
\label{app:correction}

We evaluate the gap-correction proposed in \S\ref{sec:guidance} via
training-side validation. For each outer fold, the outer test fold is
kept untouched. We split the outer training fold into validation folds,
fit the two candidate pipelines on the inner training portion, estimate
$\widehat{G}^{\mathrm{val}}(R_1)-\widehat{G}^{\mathrm{val}}(R_2)$ on
the held-out validation portion, and then correct
$\widehat{A}_{\mathrm{part}}^{\mathrm{test}}$ on the original outer
test fold. The correction therefore uses only training-side data for
the gap estimate.

Figure~\ref{fig:correction} reports the result. On the four current
decomp-vs-flat sign-changing cells (B2/B4), correction recovers the
point-estimate end-to-end sign and reduces MAE by $25$--$65\%$; the
fold-level corrected sign accuracy is $0.68$--$0.80$ on B2 and
$0.56$--$0.60$ on B4. A split-sensitivity check on these four cells
keeps the point-estimate sign correct for validation-fold counts
$3$, $5$, and $10$, and for two additional $5$-fold validation split
seeds; MAE reduction ranges from $37$--$65\%$ on B2 and $9$--$31\%$
on B4. On non-sign-changing but strongly absorbed cells, it
reduces MAE by $60$--$65\%$ on B3 and $87$--$88\%$ on B5, while B1 is
the cautionary case: the true $\Aetoe$ is only $0.001$--$0.002$, and
correction over-subtracts on the point estimate. We therefore use
correction as a diagnostic and still require direct $\Setoe$ reporting;
we do not test robustness under explicit covariate or task-distribution
shift.

\begin{figure}[H]
\centering
\includegraphics[width=\linewidth]{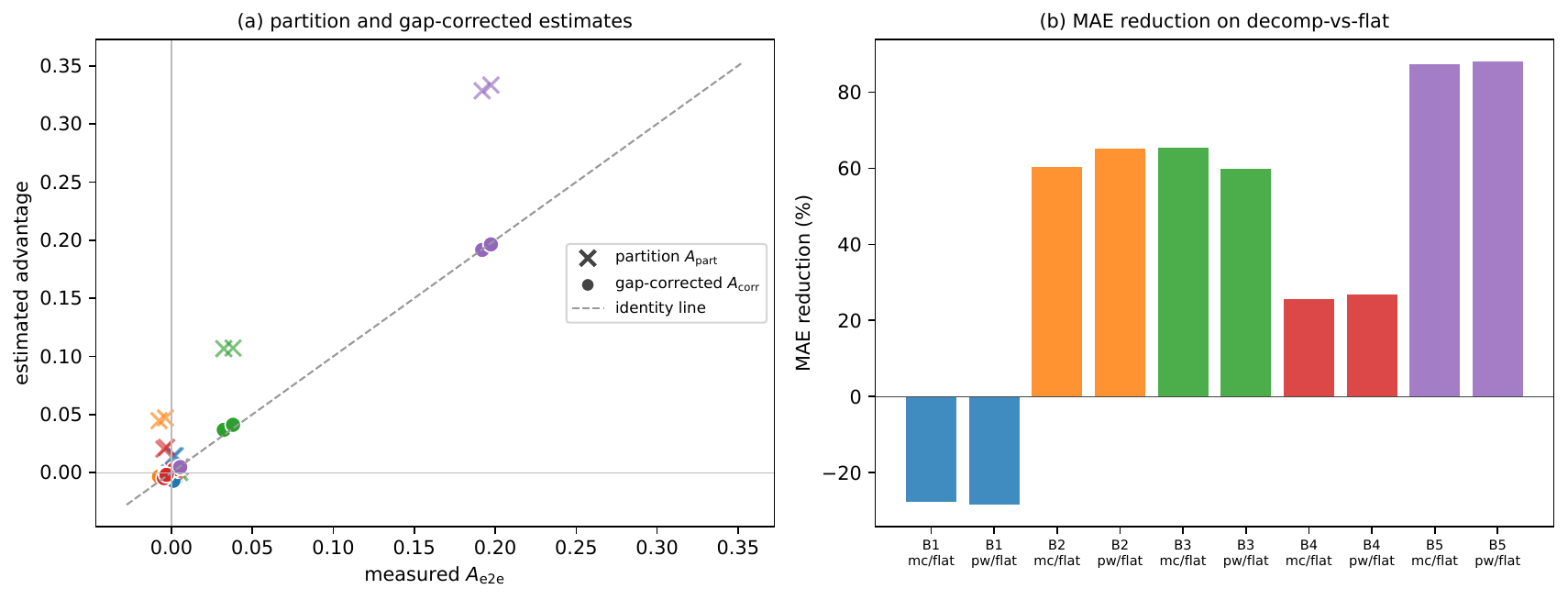}
\caption{Gap-corrected vs.\ raw partition advantage on all $15$
pairwise comparisons. \emph{(a)} crosses are raw $A_{\mathrm{part}}$,
filled circles are corrected $\widehat{A}_{\mathrm{corr}}$; corrected
points are closer to the $y=x$ identity than the raw partition estimates.
\emph{(b)} MAE reduction (\%) for each decomp-vs-flat comparison;
gain is positive on B2--B5 and negative on the B1 high-absorption cells,
where the deployable advantage is close to zero and validation-side
gap estimates can over-correct.}
\label{fig:correction}
\end{figure}

\section{Family-mapping sensitivity}
\label{app:family-sensitivity}

For B4 we re-evaluate the deployment-fidelity gap and the
decomp-vs-flat advantage transfer under three alternative family
partitions in addition to the default semantic taxonomy: (P2) a
coarse two-family collapse, (P3) a performance-cluster partition with
$K=4$ obtained by $k$-means on per-algorithm utility profiles, and
(P4) a random balanced $K=4$ placebo averaged over five seeds.
Figure~\ref{fig:family-sens} plots $G(R)$, $\Apart$, and $\Aetoe$
side-by-side per partition for B4; Table~\ref{tab:family-sens} reports
the numbers and the safety-condition flag. For B5 (PROTEUS-2014) we
report only the primary semantic partition $\Pi_{\text{P1}}$ here;
coarser K-sensitivity for B5 is in the companion reproducibility package. The
primary result is that the positive deployment-fidelity gap and the
failure of the flat-deployment safety condition on B4 are robust to
the choice of family taxonomy: every partition we test on B4 produces
$G(R)$ exceeding $\Apart$.

\begin{figure}[H]
\centering
\includegraphics[width=0.65\linewidth]{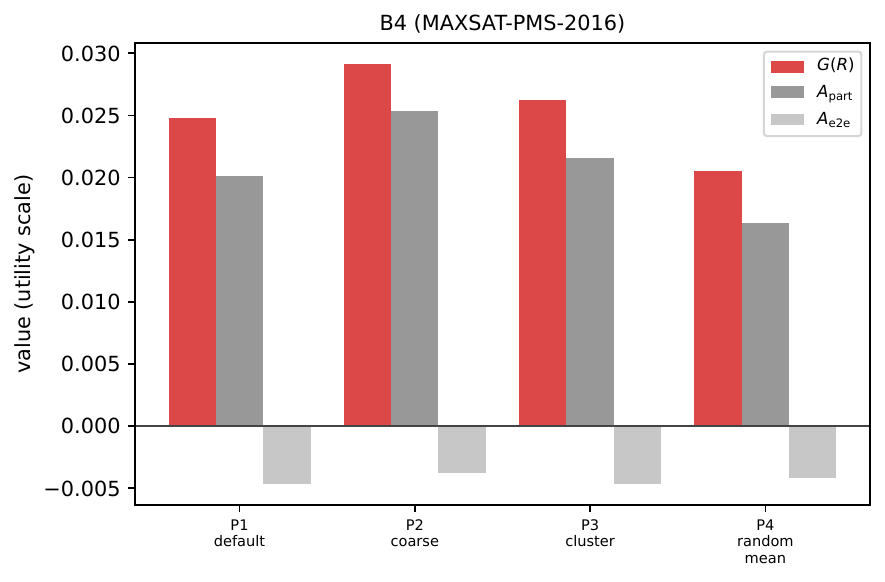}
\caption{Family-mapping sensitivity on B4. Bars are $G(R)$, $\Apart$,
and $\Aetoe$ per partition kind. B4 fails the decomp-vs-flat safety
condition $\Apart > G(R)$ for every partition tested, showing that the
gap magnitude is not driven solely by the default taxonomy. B5
mapping sensitivity beyond the primary semantic PROTEUS partition is
not used for paper claims.}
\label{fig:family-sens}
\end{figure}

\setcounter{table}{3}
\begin{table}[H]
\small
\centering
\caption{Family-mapping sensitivity. Each row is one (benchmark,
partition) cell with the resulting gap and decomp-vs-flat advantages.
``safe'' = general safety condition $\Apart > \Delta G$ where
$\Delta G = G(R) - G(F)$. P4 (random) is averaged over five seeds.
B4 fails the safety condition under every partition we test (default,
coarse, performance-cluster, random placebo); B5 (PROTEUS-2014) reports
the primary semantic partition $\Pi_{\text{P1}}$ only --- coarser
K-sensitivity for B5 is in the companion reproducibility package. Family mapping
changes the gap magnitude but does not remove it.}
\label{tab:family-sens}
\begin{tabular}{llrrrrl}
\toprule
ID & Partition & $K$ & $\Apart$ & $\Aetoe$ & $G(R)$ & safe? \\
\midrule
B4 & P1 default              & $4$ & $+0.0202$ & $-0.0047$ & $0.0248$ & no \\
B4 & P2 coarse               & $2$ & $+0.0254$ & $-0.0038$ & $0.0291$ & no \\
B4 & P3 performance-cluster  & $4$ & $+0.0216$ & $-0.0047$ & $0.0262$ & no \\
B4 & P4 random placebo (mean of 5) & $4$ & $+0.0163$ & $-0.0042$ & $0.0205$ & no \\
\midrule
B5 & P1 semantic (CSP-native + 3 SAT enc.) & $4$ & $+0.3333$ & $+0.1973$ & $0.1360$ & yes \\
\bottomrule
\end{tabular}
\end{table}

\section{Deployment-aware family selector}
\label{app:deployment-aware}

We compare the default oracle-trained family selector
($\arg\max_k \max_{a \in F_k} u(x, a)$ on the training set) against a
deployment-aware family selector that targets the cross-fitted
deployable family utility
$v_k(x) = u(x, \hat{g}_k^{(-m)}(x))$, where $\hat{g}_k^{(-m)}$ is the
within-family selector trained on the inner training fold not
containing $x$. Both pipelines share the
\texttt{PerAlgRegressionWithin} within-family selector.
Per-benchmark headline numbers appear in
Table~\ref{tab:deployment-aware}.

\begin{table}[H]
\small
\centering
\caption{Deployment-aware family selector (DA) vs.\ oracle-trained
family selector (O), with both sharing
\texttt{PerAlgRegressionWithin}. $\Delta_X = X_{\mathrm{DA}} - X_{\mathrm{O}}$.
DA shrinks $G(R)$ on every benchmark; it improves $S_{\mathrm{e2e}}$
on B1--B4 and trades off slightly on B5 PROTEUS-2014, where the
within-family utility range is large enough that targeting deployable
utility moves family decisions toward families with smaller oracle
margin. Sign-changing point estimates relative to flat persist on B2 and B4 --- the
safety condition remains the primary diagnostic.}
\label{tab:deployment-aware}
\begin{tabular}{lrrrrrrr}
\toprule
ID & $S_{\mathrm{e2e}}^{\mathrm{O}}$ & $S_{\mathrm{e2e}}^{\mathrm{DA}}$ & $\Delta S_{\mathrm{e2e}}$ & $G^{\mathrm{O}}$ & $G^{\mathrm{DA}}$ & $\Delta G$ & Sign chg. O / DA \\
\midrule
B1 & $0.9165$ & $0.9174$ & $+0.0009$ & $0.0121$ & $0.0114$ & $-0.0006$ & no / no \\
B2 & $0.8961$ & $0.9007$ & $+0.0046$ & $0.0523$ & $0.0504$ & $-0.0019$ & yes / yes \\
B3 & $0.7605$ & $0.7658$ & $+0.0053$ & $0.0752$ & $0.0723$ & $-0.0029$ & no  / no  \\
B4 & $0.9700$ & $0.9710$ & $+0.0011$ & $0.0248$ & $0.0240$ & $-0.0008$ & yes / yes \\
B5 & $0.8129$ & $0.8085$ & $-0.0045$ & $0.1365$ & $0.1286$ & $-0.0079$ & no  / no  \\
\bottomrule
\end{tabular}
\end{table}

\section{PROTEUS-2014 feature-policy and cutoff sensitivity}
\label{app:proteus-features}

The PROTEUS-2014 ASlib scenario provides four feature steps with $36$,
$54$, $54$, and $54$ features respectively for a total of $198$
features. The \texttt{csp} step contains features computable on the
original CSP instance; \texttt{direct}, \texttt{support}, and
\texttt{directorder} contain features that are only well-defined after
the corresponding CSP-to-SAT encoding has been applied. Our primary
analysis uses the \texttt{csp} step alone so that the family selector
makes its CSP-or-SAT-and-encoding decision on information available
before any encoding is fixed.

For the all-feature sensitivity, we re-run the B5 main pipeline with
all $198$ features supplied to the family selector and the same
within-family selector class. Headline numbers under both feature
policies appear in Table~\ref{tab:proteus-feature-policy}. The
all-feature setting is a stronger-information setting that gives the
family selector access to encoding-conditional features it would not
have at deployment time; we therefore interpret the \texttt{csp}-step
configuration as the primary deployment model and the all-feature
setting as a sensitivity check.

\label{app:proteus-cutoff}
The PROTEUS-2014 ASlib scenario reports algorithm runtimes censored at
the scenario cutoff of $3600$\,s and we use this cutoff for the
primary analysis. We additionally re-run B5 with the cutoff set to
$1800$\,s: any \texttt{ok} run whose runtime exceeds $1800$\,s is
re-classified as a timeout under the new (shorter) cutoff and
penalised at $\mathrm{PAR10} = 18000$\,s. We do not extrapolate beyond
the scenario cutoff (e.g.\ to $7200$\,s) because uncensored raw run
logs are not available. Table~\ref{tab:proteus-cutoff} reports
$\Spart, \Setoe, G(R), \Apart, \Aetoe, \rho$ under both cutoffs; the
direction of every conclusion is preserved.

\begin{table}[H]
\small
\centering
\caption{PROTEUS-2014 feature-policy sensitivity. Headline pipeline
quantities under (a) the primary \texttt{csp} feature step ($36$
features) and (b) the all-feature ($198$) sensitivity. The
\texttt{csp} setting is the deployment model (encoding-conditional
features are not visible to the family selector); the all-feature
setting is a stronger-information setting that shows what would happen
if the family selector could see encoding features computable only
after an encoding has been chosen.}
\label{tab:proteus-feature-policy}
\begin{tabular}{llrrr}
\toprule
Feature policy & Pipeline & $\Spart$ & $\Setoe$ & $G$ \\
\midrule
csp ($36$) primary  & pairwise   & $0.954$ & $0.820$ & $0.134$ \\
csp ($36$) primary  & multiclass & $0.951$ & $0.816$ & $0.135$ \\
csp ($36$) primary  & flat       & $0.622$ & $0.622$ & $0.000$ \\
\midrule
all ($198$) sensit. & pairwise   & $0.945$ & $0.845$ & $0.100$ \\
all ($198$) sensit. & multiclass & $0.938$ & $0.840$ & $0.098$ \\
all ($198$) sensit. & flat       & $0.669$ & $0.669$ & $0.000$ \\
\bottomrule
\end{tabular}\\
{\footnotesize This sensitivity is exploratory and is not used for
headline claims. Directional conclusions are unchanged under both
feature policies: $G > 0$ on both decomposed pipelines, and the
all-feature setting reduces $G$ by roughly $25\%$ ($0.135 \to 0.098$
on multiclass, $0.134 \to 0.100$ on pairwise) via improvement in
$\Setoe$ rather than reduction in $\Spart$ --- the within-family
selector benefits more from extra features than the family oracle
does. Full outputs are included in the companion reproducibility
package.}
\end{table}

\begin{table}[H]
\small
\centering
\caption{PROTEUS-2014 cutoff sensitivity. Headline pipeline quantities
under the $3600$\,s scenario cutoff (primary) and a $1800$\,s
sensitivity in which any \texttt{ok} run exceeding $1800$\,s is
re-classified as a timeout and penalised at PAR10. We do not extrapolate
beyond the scenario cutoff because uncensored raw run logs are not
available.}
\label{tab:proteus-cutoff}
\begin{tabular}{llrrr}
\toprule
Cutoff & Pipeline & $\Spart$ & $\Setoe$ & $G$ \\
\midrule
$3600$\,s (primary) & pairwise   & $0.954$ & $0.820$ & $0.134$ \\
$3600$\,s (primary) & multiclass & $0.951$ & $0.816$ & $0.135$ \\
$3600$\,s (primary) & flat       & $0.622$ & $0.622$ & $0.000$ \\
\midrule
$1800$\,s sensitivity & pairwise   & $0.953$ & $0.819$ & $0.134$ \\
$1800$\,s sensitivity & multiclass & $0.948$ & $0.812$ & $0.136$ \\
$1800$\,s sensitivity & flat       & $0.616$ & $0.616$ & $0.000$ \\
\bottomrule
\end{tabular}\\
{\footnotesize This sensitivity is exploratory and is not used for
headline claims. Directional conclusions are unchanged: $G(R)$ is
essentially invariant under the cutoff shortening (pairwise gap
$0.134 \to 0.134$, multiclass $0.135 \to 0.136$); halving the cutoff
drops $14$ all-timeout instances ($n: 3565 \to 3551$) and pulls the
flat baseline's PAR10 slightly downward without changing the
conclusion. Full outputs are included in the companion reproducibility
package.}
\end{table}

\section{Metric-transform sensitivity on B4--B5}
\label{app:native-metric}

This appendix checks whether the B4 sign-change and B5 absorption
conclusions are artefacts of the per-instance min--max utility
transform of \S\ref{sec:protocol}. The substantive sensitivity check is
decomp-vs-flat advantage transfer under a second common $[0,1]$
transform, per-instance rank utility. Appendix~\ref{app:transform}
defines the primary utility from raw PAR10/runtime before
row-normalisation, but the downstream prediction-cache diagnostic used
for this sensitivity stores normalised utilities and predictions rather
than a reusable raw-cost matrix aligned to every recomputed comparison
mask. The ``native'' column in the released diagnostic therefore falls
back to a monotone $-\,$utility proxy and is reported only as an
implementation sanity check, not as independent raw-second evidence.
We consequently avoid using the pointwise non-negativity of a
partition-oracle cost gap as robustness evidence.

\begin{table}[H]
\small
\centering
\caption{Metric-transform sensitivity for decomp-vs-flat advantage
transfer on B4--B5. Positive advantages favour decomposition. Min--max
is the headline utility; rank utility maps the best algorithm on an
instance to $1$ and the worst to $0$. Rank utility confirms the B4
sign change and the strong B5 absorption pattern, although B5's
multiclass rank end-to-end margin is effectively zero.}
\label{tab:native-metric}
\begin{tabular}{llrrrr}
\toprule
ID & Pair & $\Apart^\mathrm{mm}$ & $\Aetoe^\mathrm{mm}$ & $\Apart^\mathrm{rank}$ & $\Aetoe^\mathrm{rank}$ \\
\midrule
B4 & mc vs flat & $+0.020$ & $-0.0047$ & $+0.086$ & $-0.162$ \\
B4 & pw vs flat & $+0.022$ & $-0.0031$ & $+0.087$ & $-0.160$ \\
\midrule
B5 & mc vs flat & $+0.328$ & $+0.192$ & $+0.237$ & $-0.0004$ \\
B5 & pw vs flat & $+0.333$ & $+0.197$ & $+0.243$ & $+0.0047$ \\
\bottomrule
\end{tabular}\\
{\footnotesize Full long-form outputs, including the native proxy
sanity check, are in \texttt{results/exp10\_native\_metric}. Because
the proxy is monotone-equivalent to the headline utility, it is not
used as separate robustness evidence.}
\end{table}

\section{Within-family selector robustness sweep}
\label{app:exp5-b4b5}

\begin{figure}[t]
\centering
\includegraphics[width=0.55\linewidth]{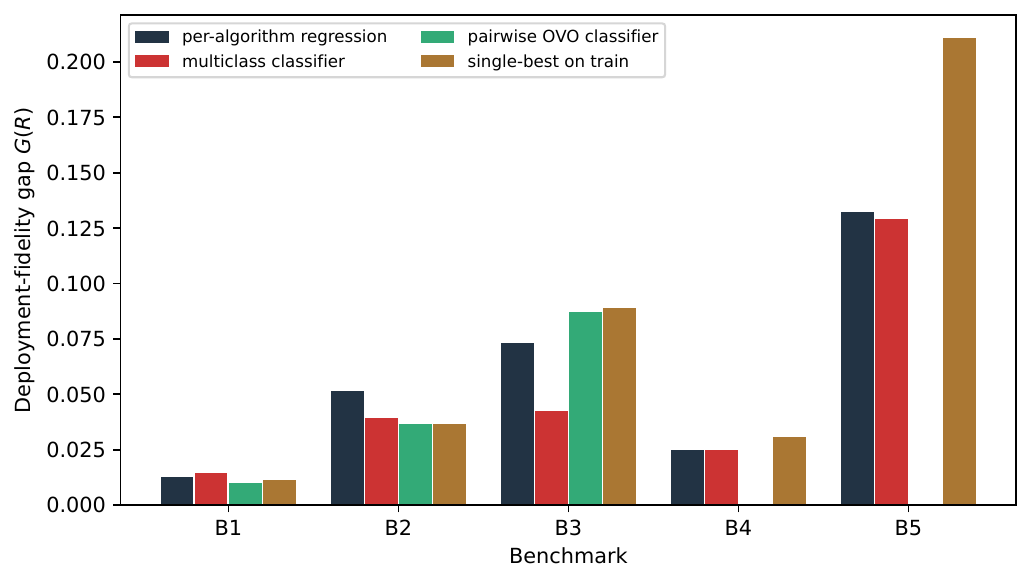}
\caption{$G(R)$ averaged over family-selector choice, by within-family
selector class. B1--B3 sweep four classes (per-algorithm GBDT,
multiclass, pairwise OVO, single-best); B4--B5 sweep three (pairwise
OVO is omitted on ASlib due to its $\mathcal{O}(|F|^2)$ solver-pair
cost). Family-selector class changes $\Spart$ negligibly (within
$5\times 10^{-3}$), while within-family selector class shifts $G$ by
up to $2.2\times$ on TALENT (B3) and produces a $0.21$ gap on B5
under the main-sweep single-best-on-train selector.}
\label{fig:robust}
\end{figure}

\S\ref{sec:robustness} summarises the within-family selector
sensitivity sweep, visualised in Figure~\ref{fig:robust}. On B1--B3
we sweep four classes (per-algorithm GBDT, multiclass, pairwise OVO,
single-best-on-train); on B4--B5 we sweep three
(pairwise-within-family is omitted because its $O(|F|^2)$ cost on the
larger ASlib pools is prohibitive). $G(R)$ ranges
$[0.025, 0.031]$ on B4 (spread $1.28\times$) and $[0.128, 0.211]$ on
B5 (spread $1.65\times$). Even the best within-family selector in
this sweep produces a substantial gap on both benchmarks; the largest
main-sweep PROTEUS-2014 value is $G=0.211$ under the
single-best-on-train selector. The deployment-fidelity gap is
therefore not a property of any single weak within-family selector.

Table~\ref{tab:exp5-full} reports the full $24$-row robustness sweep
on B1--B3 (2 family selectors $\times$ 4 within-family selectors
$\times$ 3 benchmarks; the pairwise-within-family configuration is
omitted on B4--B5 due to its $O(|F|^2)$ cost). Family-selector class
changes $\Spart$ negligibly (the row pairs comparing PW-family vs
MC-family at fixed within-class agree to $\le 5 \times 10^{-3}$ on
every benchmark), while within-family selector class shifts $G$ by up
to $1.44\times$ on B1 and B2 and by $2.23\times$ on B3 (TALENT, $G
\in [0.041, 0.092]$).

\begin{table}[H]
\small
\centering
\caption{Within-family selector robustness sweep on
B1--B3. Abbreviations: family selectors PW = pairwise OVO, MC =
multiclass softmax; within-family selectors GBDT = per-algorithm
gradient-boosted regressor, MC = multiclass classifier, PW = pairwise
OVO classifier, SB = single-best-on-train. Each row is computed on its
row-specific common valid-instance mask; $G=\Spart-\Setoe$ holds before
rounding.}
\label{tab:exp5-full}
\begin{tabular}{llrrr}
\toprule
ID & Fam.\,/\,Within & $\Spart$ & $\Setoe$ & $G$ \\
\midrule
B1 & PW / GBDT & $0.9284$ & $0.9154$ & $0.0130$ \\
B1 & PW / MC   & $0.9284$ & $0.9141$ & $0.0143$ \\
B1 & PW / PW   & $0.9284$ & $0.9180$ & $0.0104$ \\
B1 & PW / SB   & $0.9284$ & $0.9169$ & $0.0115$ \\
B1 & MC / GBDT & $0.9286$ & $0.9165$ & $0.0121$ \\
B1 & MC / MC   & $0.9286$ & $0.9143$ & $0.0143$ \\
B1 & MC / PW   & $0.9286$ & $0.9187$ & $0.0099$ \\
B1 & MC / SB   & $0.9286$ & $0.9171$ & $0.0115$ \\
B2 & PW / GBDT & $0.9508$ & $0.9001$ & $0.0507$ \\
B2 & PW / MC   & $0.9508$ & $0.9119$ & $0.0389$ \\
B2 & PW / PW   & $0.9508$ & $0.9138$ & $0.0370$ \\
B2 & PW / SB   & $0.9508$ & $0.9145$ & $0.0363$ \\
B2 & MC / GBDT & $0.9484$ & $0.8961$ & $0.0523$ \\
B2 & MC / MC   & $0.9484$ & $0.9090$ & $0.0393$ \\
B2 & MC / PW   & $0.9484$ & $0.9120$ & $0.0364$ \\
B2 & MC / SB   & $0.9484$ & $0.9111$ & $0.0373$ \\
B3 & PW / GBDT & $0.8361$ & $0.7654$ & $0.0707$ \\
B3 & PW / MC   & $0.8361$ & $0.7951$ & $0.0410$ \\
B3 & PW / PW   & $0.8361$ & $0.7511$ & $0.0850$ \\
B3 & PW / SB   & $0.8361$ & $0.7501$ & $0.0860$ \\
B3 & MC / GBDT & $0.8357$ & $0.7605$ & $0.0752$ \\
B3 & MC / MC   & $0.8357$ & $0.7917$ & $0.0440$ \\
B3 & MC / PW   & $0.8357$ & $0.7460$ & $0.0897$ \\
B3 & MC / SB   & $0.8357$ & $0.7442$ & $0.0915$ \\
\bottomrule
\end{tabular}
\end{table}

\section{Full numerical tables for gap and advantage-transfer analyses}
\label{app:full-tables}

Table~\ref{tab:exp2-full} reports per-cell $(\Spart, \Setoe, G)$ with
$95\%$ CIs and paired Wilcoxon $p$-values for all $15$ (benchmark,
decomposition) cells (Figure~\ref{fig:forest}).
Table~\ref{tab:exp3-full} reports the full $15$-pair advantage transfer
table including the pairwise-vs-multiclass cells omitted from
Table~\ref{tab:transfer} for space.
Figure~\ref{fig:scatter} visualises the same $15$ comparisons in
$(\Apart, \Aetoe)$ space. Figure~\ref{fig:safety} plots the
deployment-safety condition for the $10$ decomp-vs-flat comparisons
referenced in \S\ref{sec:rq2-flat}, and Table~\ref{tab:interval-summary}
summarises the partition-only identification intervals
(Theorem~\ref{thm:identification}) referenced in \S\ref{sec:rq3}.

\begin{figure}[t]
\centering
\includegraphics[width=0.60\linewidth]{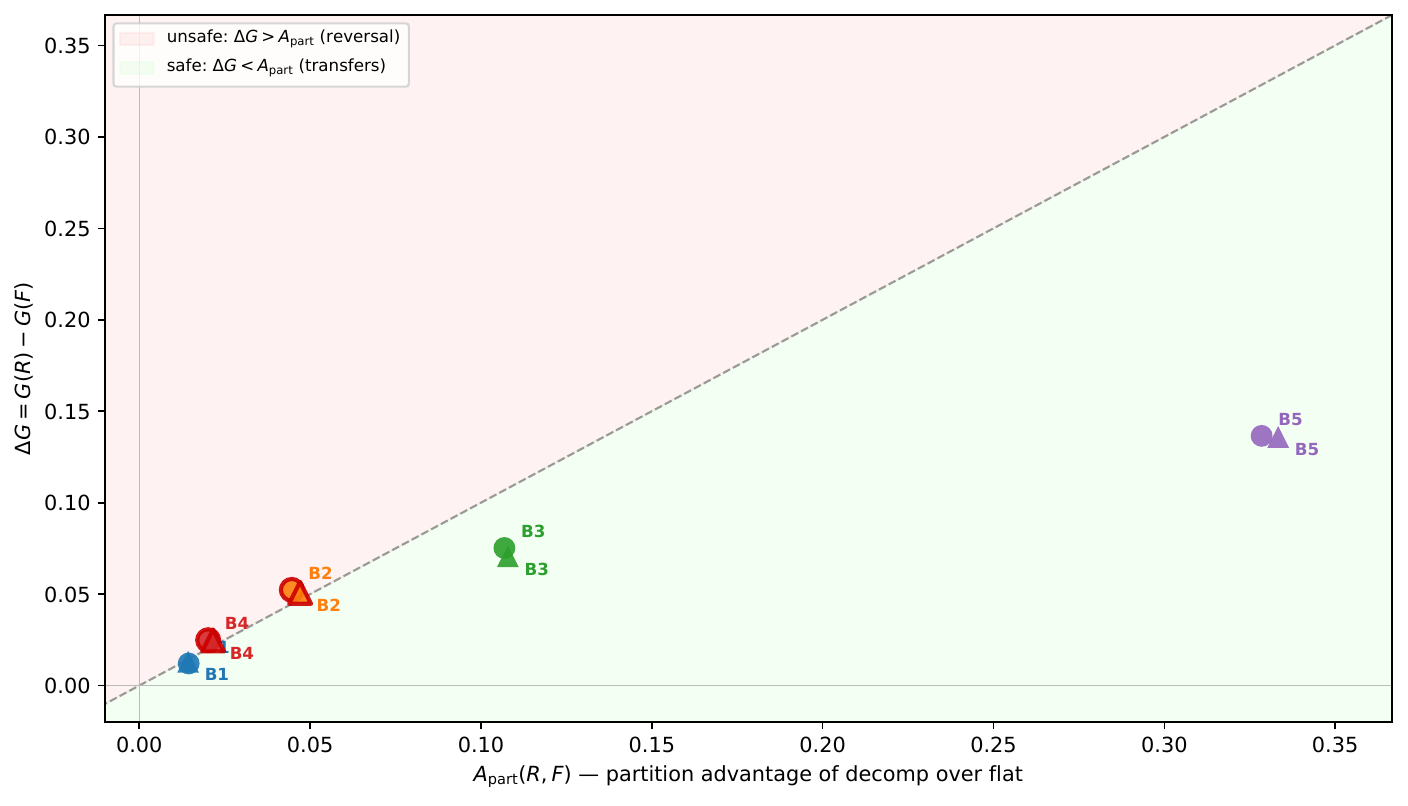}
\caption{Deployment-safety plot for the $10$ decomp-vs-flat
comparisons. By Corollary~\ref{cor:flat-safety} (general form), the
partition-level advantage transfers iff $\Apart > \Delta G$, where
$\Delta G = G(R) - G(F)$, i.e.\ below the dashed $y=x$ line. B1, B2,
B4 occupy distinct absorption regimes: B2 and B4 lie in the unsafe
(red) region and have sign-changing point estimates; B1 remains deployment-safe but is almost
erased. B3 and B5 (PROTEUS-2014) are deployment-safe but partially
absorbed; B5's $33$-point partition advantage shrinks to a $20$-point
end-to-end advantage, $\rho \approx 0.41$. All flat controls have
$G(F)=0$.}
\label{fig:safety}
\end{figure}

\begin{table}[H]
\small
\centering
\caption{Partition-only identification intervals
(Theorem~\ref{thm:identification}) summary. ``Crosses 0'' counts the
fraction of pairs whose interval strictly crosses zero, i.e.\ the fraction of
comparisons on which the partition-level report alone (with
selected-family range information) does not certify a deployable
winner. Median interval width $\bar{W} = (W_{R_1} + W_{R_2})/2$ is the
half-interval that the partition-level report cannot resolve.}
\label{tab:interval-summary}
\begin{tabular}{lrrrr}
\toprule
ID & Pairs & Crosses 0 & Median $\bar{W}$ & Mean $|\Apart|$ \\
\midrule
B1 & 3 & 3 & $0.363$ & $0.010$ \\
B2 & 3 & 3 & $0.102$ & $0.031$ \\
B3 & 3 & 3 & $0.233$ & $0.066$ \\
B4 & 3 & 3 & $0.134$ & $0.014$ \\
B5 & 3 & 3 & $0.378$ & $0.221$ \\
\bottomrule
\end{tabular}
\end{table}

\begin{table}[H]
\small
\centering
\caption{Per-instance margin--regret diagnostic rates for decomposed
pipelines. The reported quantity is the selected-family existential
violation rate from Theorem~\ref{thm:margin-regret}; bootstrap
intervals are omitted here for space and are included in
\texttt{results/exp9\_margin\_regret}.}
\label{tab:margin-regret-rates}
\begin{tabular}{lrrr}
\toprule
ID & $n$ & pairwise & multiclass \\
\midrule
B1 & $133$ & $0.248$ & $0.263$ \\
B2 & $202$ & $0.391$ & $0.371$ \\
B3 & $192/191$ & $0.557$ & $0.545$ \\
B4 & $556$ & $0.228$ & $0.234$ \\
B5 & $3565$ & $0.381$ & $0.387$ \\
\bottomrule
\end{tabular}
\end{table}

\begin{figure}[t]
\centering
\includegraphics[width=\linewidth]{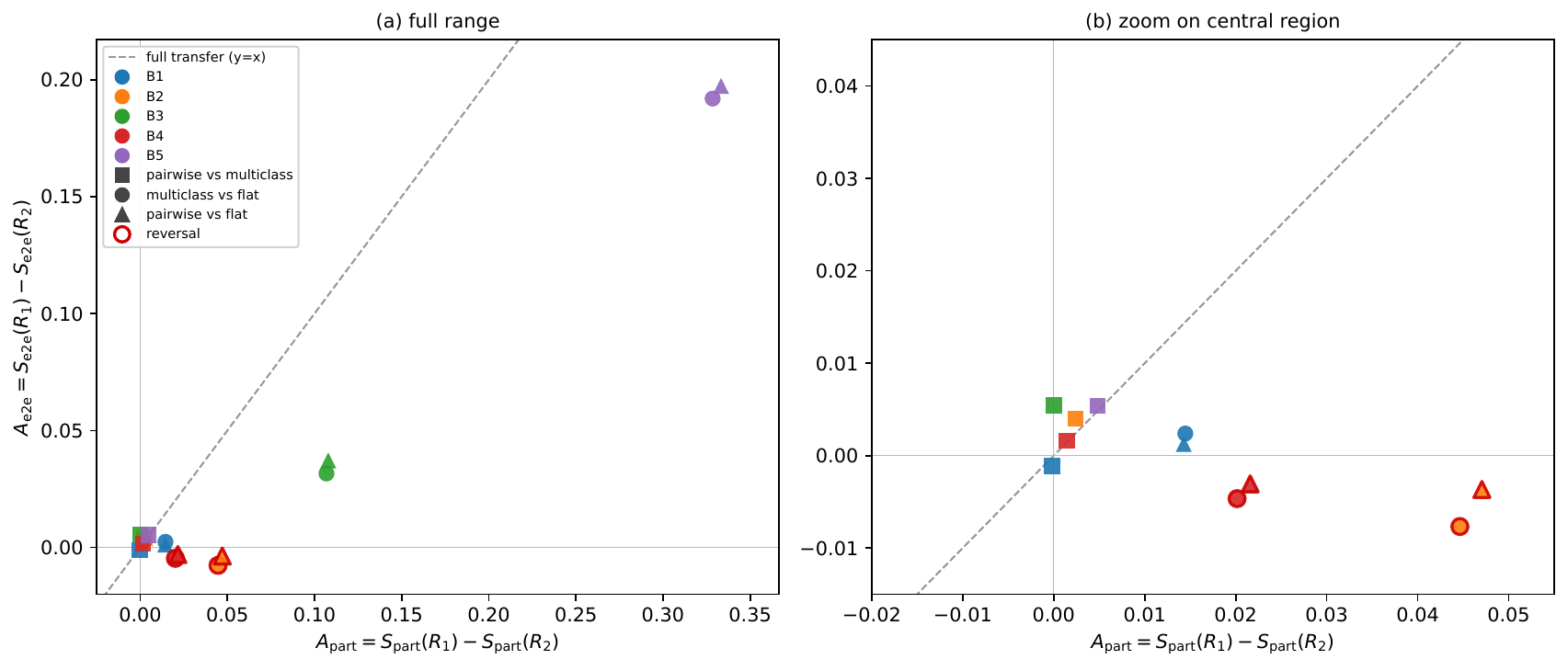}
\caption{Partition-level advantage $\Apart$ against end-to-end
advantage $\Aetoe$ across all $15$ pairwise method comparisons (color
by benchmark, shape by pair type). The dashed diagonal is full transfer
($\Apart = \Aetoe$); points near the horizontal axis indicate
absorption; points in opposite quadrants indicate sign changes. These
cells are outlined in red. \emph{(a)} full range showing B5's outlier
position; \emph{(b)} zoom on the central region containing the four
sign-changing cells (decomp-vs-flat on B2 and B4 --- all in the
lower-right quadrant, where partition prefers decomposition but e2e
prefers flat), the high-absorption B1 cells, and the near-origin
pairwise-vs-multiclass comparisons.}
\label{fig:scatter}
\end{figure}
Pairwise-vs-multiclass entries with
$|\Apart| < 5\times 10^{-3}$ are marked NA for $\rho$ per the reporting threshold
in \S\ref{sec:guidance}.

\begin{table}[H]
\small
\centering
\caption{Full deployment-fidelity-gap table by
(benchmark, decomposition). $G = \Spart - \Setoe$; CI is a $95\%$
percentile-bootstrap interval over per-instance gaps; $p$ is the paired
Wilcoxon $p$-value comparing $u_{\mathrm{part}}$ vs $u_{\mathrm{e2e}}$
on the same instances.}
\label{tab:exp2-full}
\begin{tabular}{llrrrlrr}
\toprule
ID & Decomposition & $\Spart$ & $\Setoe$ & $G$ & $95\%$ CI & $p$ & $n$ \\
\midrule
B1 & pairwise   & $0.9284$ & $0.9154$ & $0.0130$ & $[0.0091,\, 0.0176]$ & $8.15{\times}10^{-17}$ & $133$ \\
B1 & multiclass & $0.9286$ & $0.9165$ & $0.0121$ & $[0.0087,\, 0.0162]$ & $5.57{\times}10^{-17}$ & $133$ \\
B1 & flat       & $0.9141$ & $0.9141$ & $0.0000$ & $[0.0000,\, 0.0000]$ & $1$                     & $133$ \\
B2 & pairwise   & $0.9508$ & $0.9001$ & $0.0507$ & $[0.0401,\, 0.0625]$ & $1.96{\times}10^{-30}$ & $202$ \\
B2 & multiclass & $0.9484$ & $0.8961$ & $0.0523$ & $[0.0414,\, 0.0645]$ & $4.37{\times}10^{-30}$ & $202$ \\
B2 & flat       & $0.9037$ & $0.9037$ & $0.0000$ & $[0.0000,\, 0.0000]$ & $1$                     & $202$ \\
B3 & pairwise   & $0.8361$ & $0.7654$ & $0.0707$ & $[0.0522,\, 0.0905]$ & $1.72{\times}10^{-31}$ & $194$ \\
B3 & multiclass & $0.8357$ & $0.7605$ & $0.0752$ & $[0.0564,\, 0.0958]$ & $3.43{\times}10^{-31}$ & $193$ \\
B3 & flat       & $0.7247$ & $0.7247$ & $0.0000$ & $[0.0000,\, 0.0000]$ & $1$                     & $200$ \\
B4 & pairwise   & $0.9962$ & $0.9716$ & $0.0247$ & $[0.0153,\, 0.0348]$ & $3.90{\times}10^{-91}$ & $556$ \\
B4 & multiclass & $0.9948$ & $0.9700$ & $0.0248$ & $[0.0155,\, 0.0350]$ & $2.68{\times}10^{-91}$ & $556$ \\
B4 & flat       & $0.9746$ & $0.9746$ & $0.0000$ & $[0.0000,\, 0.0000]$ & $1$                     & $556$ \\
B5 & pairwise   & $0.9543$ & $0.8183$ & $0.1360$ & $[0.1257,\, 0.1467]$ & $<10^{-300}$            & $3565$ \\
B5 & multiclass & $0.9495$ & $0.8129$ & $0.1365$ & $[0.1262,\, 0.1472]$ & $<10^{-300}$            & $3565$ \\
B5 & flat       & $0.6210$ & $0.6210$ & $0.0000$ & $[0.0000,\, 0.0000]$ & $1$                     & $3565$ \\
\bottomrule
\end{tabular}
\end{table}

\begin{table}[H]
\small
\centering
\caption{Full advantage-transfer table: all $15$
(benchmark, pair) comparisons. ``pw'' = pairwise OVO, ``mc'' = multiclass
softmax. $\rho = (G(R_1) - G(R_2))/\Apart$ marked NA when
$|\Apart| < 5\times 10^{-3}$. The final column marks sign-changing
point estimates, not a large-margin decision threshold.}
\label{tab:exp3-full}
\begin{tabular}{llrrrrlc}
\toprule
ID & Pair & $\Apart$ & $\Aetoe$ & $G(R_1)$ & $G(R_2)$ & $\rho$ & Sign chg. \\
\midrule
B1 & pw vs mc   & $-0.0002$ & $-0.0011$ & $0.0130$ & $0.0121$ & NA       & no \\
B1 & mc vs flat & $+0.0145$ & $+0.0024$ & $0.0121$ & $0.0000$ & $+0.835$ & no \\
B1 & pw vs flat & $+0.0143$ & $+0.0013$ & $0.0130$ & $0.0000$ & $+0.912$ & no \\
B2 & pw vs mc   & $+0.0024$ & $+0.0040$ & $0.0507$ & $0.0523$ & NA       & no \\
B2 & mc vs flat & $+0.0446$ & $-0.0077$ & $0.0523$ & $0.0000$ & $+1.172$ & \textbf{yes} \\
B2 & pw vs flat & $+0.0471$ & $-0.0037$ & $0.0507$ & $0.0000$ & $+1.078$ & \textbf{yes} \\
B3 & pw vs mc   & $+0.0000$ & $+0.0054$ & $0.0698$ & $0.0752$ & NA       & no \\
B3 & mc vs flat & $+0.1069$ & $+0.0316$ & $0.0752$ & $0.0000$ & $+0.704$ & no \\
B3 & pw vs flat & $+0.1079$ & $+0.0372$ & $0.0707$ & $0.0000$ & $+0.655$ & no \\
B4 & pw vs mc   & $+0.0014$ & $+0.0016$ & $0.0247$ & $0.0248$ & NA       & no \\
B4 & mc vs flat & $+0.0202$ & $-0.0047$ & $0.0248$ & $0.0000$ & $+1.231$ & \textbf{yes} \\
B4 & pw vs flat & $+0.0216$ & $-0.0031$ & $0.0247$ & $0.0000$ & $+1.142$ & \textbf{yes} \\
B5 & pw vs mc   & $+0.0048$ & $+0.0054$ & $0.1360$ & $0.1365$ & NA       & no \\
B5 & mc vs flat & $+0.3285$ & $+0.1919$ & $0.1365$ & $0.0000$ & $+0.416$ & no \\
B5 & pw vs flat & $+0.3333$ & $+0.1973$ & $0.1360$ & $0.0000$ & $+0.408$ & no \\
\bottomrule
\end{tabular}
\end{table}

\section{Statistical methodology}
\label{app:stats}

\paragraph{Pairing unit.} The pairing unit for all paired tests is the
\emph{test instance}, not the (seed, instance) pair: seeds for the same
instance share folds and are not i.i.d. We average per-instance
utilities across seeds first, then perform paired tests on the
per-instance means.

\paragraph{Paired Wilcoxon tests.} For each (benchmark, decomposition)
cell we run a two-sided Wilcoxon signed-rank test on the per-instance
gap vector $u_{\mathrm{part}} - u_{\mathrm{e2e}}$. Reported $p$-values
are exact when ties allow, otherwise asymptotic. Across $15$ cells the
Holm-Bonferroni adjusted $p$-values remain $< 10^{-15}$ for all
decomposed cells.

\paragraph{Bootstrap confidence intervals.} Confidence intervals on
$G(R)$ are computed by sampling $B = 10\,000$ bootstrap replicates of
the per-instance gap vector with replacement and reporting the
$2.5$th and $97.5$th percentiles of the bootstrap mean
distribution. The same convention is applied to $\Apart, \Aetoe$
when reported with intervals.

\paragraph{Effect size.} For pairwise method comparisons we report
$\rho$ as the primary effect-size summary; $\rho$ is dimensionless and
combines the magnitude of $\Apart$ with the signed gap-difference
$G(R_1) - G(R_2)$. We do not report Cliff's $\delta$ in the main paper
because $\rho$ has a direct algebraic interpretation in terms of
absorption.

\section{Compute resources and reproducibility}
\label{app:compute}

\paragraph{Hardware.} All experiments run on a single 8-core CPU
workstation; no GPU is required. Sklearn's \texttt{n\_jobs=-1} is used
for random-forest training. Memory footprint stays below $4$\,GB across
all benchmarks.

\paragraph{Wall-clock time.} A full reproduction of the reported analyses
across all five benchmarks takes approximately $3$--$4$ hours of total
wall-clock time on the reference workstation. The most expensive
configuration is the
pairwise-within-family configuration on TALENT (B3), which trains
$O(|F|^2 \cdot K_{\mathrm{cv}} \cdot \text{seeds})$ binary random
forests per fit; the largest TALENT family contains $25$ algorithms.

\paragraph{Determinism.} Splits and selectors are seeded; rerunning
the scripts with the same seeds reproduces the numerical
tables in this appendix to four decimal places. Bootstrap CIs use
\texttt{numpy.random.default\_rng} with a fixed seed.

\paragraph{Software stack.} Python~3.10, scikit-learn~$\geq$~1.3,
NumPy~$\geq$~1.24, pandas~$\geq$~2.0, SciPy~$\geq$~1.10,
matplotlib~$\geq$~3.7. The companion environment file pins exact
versions used to produce the tables in this paper.

\clearpage
\section{Literature audit of oracle-style reporting}
\label{app:literature-audit}

This appendix substantiates the claim, made in
\S\ref{sec:related} and the abstract, that oracle-style scores --
Virtual Best Solver (VBS), selected-portfolio VBS, virtual-best
encodings, and best-in-family summaries -- are routinely reported in
algorithm selection, solver portfolios, encoding selection, and
tabular benchmarking. Table~\ref{tab:literature-audit} lists, for each
of $12$ representative anchor citations, the oracle-style reporting
object that appears in the work, what that row supports about the
prevalence of such reporting, and an explicit caveat indicating what
the row does \emph{not} establish.

The audit makes a deliberately narrow empirical claim. It does
\emph{not} claim that any cited work conflates a partition-level oracle
score with a deployable system score: each cited paper either reports
the oracle quantity as an explicit upper bound (e.g.\ AutoFolio's
oracle/VBS columns;
Proteus's VB-CSP, VB-SAT, and VB-Encoding rows reported alongside the
deployable Proteus row) or studies the gap between such an oracle and
a learned selector at a single decision level (e.g.\ encoding
selection). The claim the audit \emph{does} support is that an
oracle-style reporting object -- a virtual-best at some level, or a
best-in-family summary -- is the standard primary comparison object
across all four sub-areas. Combined with our empirical results, this
shows that the deployment-fidelity gap defined in this paper is a
gap that the existing reporting practice cannot diagnose: when a
reported number takes the within-family maximum after a family choice
is made, the gap to a deployable end-to-end system that inherits both
choices is not surfaced. \citet{shmuel2025tabular} is the
representative tabular instance for which we have direct supporting
evidence: the published headline counts of which family ``wins'' on
which dataset are derived from the within-family argmax on $10$-fold
cross-validation means, with no within-family deployable selector
specified, defined, or evaluated anywhere in the paper or its
supplement.

\keepXColumns
\footnotesize
\renewcommand{\arraystretch}{1.15}
\begin{tabularx}{\textwidth}{@{}p{2.85cm} X X c@{}}
\caption{Literature audit of oracle-style reporting. ``Strength''
indicates how directly the row supports the audit's narrow claim that
oracle-style quantities (VBS, selected-portfolio VBS, virtual-best
encoding, best-in-family) are pervasive across algorithm selection,
solver portfolios, encoding selection, and tabular benchmarking
(\protect\rule{0.45em}{0.45em}\protect\rule{0.45em}{0.45em}\protect\rule{0.45em}{0.45em}=direct,
\protect\rule{0.45em}{0.45em}\protect\rule{0.45em}{0.45em}=supporting,
\protect\rule{0.45em}{0.45em}=adjacent context). The caveat column
states what each row does \emph{not} establish.}
\label{tab:literature-audit} \\
\toprule
\textbf{Anchor} & \textbf{Oracle-style reporting object} & \textbf{Audit support / caveat} & \textbf{Str.} \\
\midrule
\endfirsthead
\toprule
\textbf{Anchor} & \textbf{Oracle-style reporting object} & \textbf{Audit support / caveat} & \textbf{Str.} \\
\midrule
\endhead
\midrule
\multicolumn{4}{r}{\emph{continued on next page}}\\
\endfoot
\bottomrule
\endlastfoot
\citet{cameron2016vbs} & VBS defined as a hypothetical algorithm picking the best solver per instance from a portfolio; SBS--VBS gap analysed for evaluator bias under randomised solvers. & Standard VBS diagnostic; bias paper for the full-portfolio VBS, not a within-family nested oracle. \emph{Caveat:} not a decomposed AS critique. & \rule{0.45em}{0.45em}\rule{0.45em}{0.45em}\rule{0.45em}{0.45em} \\
\citet{bischl2016aslib} & ASlib repository: AS scenarios benchmark deployable selectors against VBS and SBS as standardised references. & VBS/SBS as the AS-benchmark-ecosystem standard. \emph{Caveat:} ASlib does not require nor prohibit reporting nested oracle quantities; it standardises a single-stage comparison. & \rule{0.45em}{0.45em}\rule{0.45em}{0.45em}\rule{0.45em}{0.45em} \\
\citet{lindauer2015autofolio} & AutoFolio reports oracle/VBS as columns alongside the deployable selector in its result tables. & Joint reporting of deployable selector and oracle bound is the canonical AS format. \emph{Caveat:} AutoFolio explicitly labels oracle as a bound, not a deployable score. & \rule{0.45em}{0.45em}\rule{0.45em}{0.45em}\rule{0.45em}{0.45em} \\
\citet{lindauer2019competitions} & The 2015/2017 AS competitions normalise scores by the SBS--VBS interval. & VBS is a competition-protocol primitive, not just a per-paper choice. \emph{Caveat:} normalisation is single-level, not nested. & \rule{0.45em}{0.45em}\rule{0.45em}{0.45em} \\
\citet{hurley2014proteus} & Hierarchical solver portfolio: CSP-vs-SAT, then SAT-encoding, then SAT solver. Tables report VB-Proteus, VB-CSP, VB-SAT, plus per-encoding VB-DirectOrder, VB-Direct, VB-Support alongside the deployable Proteus selector. & Direct evidence that decomposed solver portfolios report \emph{subportfolio} virtual-best scores. \emph{Caveat:} Proteus does not claim its subportfolio VBs are deployable; it reports the deployable system in the same table. The deployment-fidelity gap is precisely the difference between the rows. & \rule{0.45em}{0.45em}\rule{0.45em}{0.45em}\rule{0.45em}{0.45em} \\
\citet{bach2022kportfolios} & SAT solver $k$-portfolios: VBS of a selected portfolio is reported as a lower bound (i.e.\ best achievable cost) for any model-based selector built on that portfolio. & Selected-portfolio VBS is a normal evaluation object. \emph{Caveat:} this is portfolio analysis, not a decomposed AS pipeline. & \rule{0.45em}{0.45em}\rule{0.45em}{0.45em}\rule{0.45em}{0.45em} \\
\citet{kostovska2023psaas} & Black-box optimisation AAS: AAS performance is reported relative to the virtual best solver \emph{from the selected portfolio}, after a portfolio-selection step. & Recent (2023) anchor: selected-portfolio VBS is still a primary comparator in AAS literature. \emph{Caveat:} portfolio selection is a different stage from the within-family selection studied here. & \rule{0.45em}{0.45em}\rule{0.45em}{0.45em} \\
\citet{stojadinovic2014mesat} & CSP-to-SAT encoding selection: motivates choosing among direct, log, support, order encodings using CSP-side syntactic features because no encoding dominates uniformly. & Encoding-level family selection is a real research problem; the family abstraction in our framework maps directly onto the encoding choice. \emph{Caveat:} meSAT is encoding-selection methodology, not an oracle-vs-deployable critique. & \rule{0.45em}{0.45em} \\
\citet{ulricholtean2022encodings} & Pseudo-Boolean / linear-integer constraint encoding selection: virtual-best encoding reported as the upper bound for a learned encoding selector. & Per-instance virtual-best encoding is the standard upper bound in encoding selection. \emph{Caveat:} a single-level oracle, evaluated correctly as a bound. & \rule{0.45em}{0.45em}\rule{0.45em}{0.45em} \\
\citet{ulricholtean2023learning} & Journal extension: explicitly analyses the single-best vs virtual-best encoding gap and reports how much of that gap a supervised encoding selector closes. & Single-best/virtual-best gap is now a primary analysis object in the encoding-selection literature; close to the spirit of $G(R)$ but applied to a flat encoding choice rather than a decomposed selector. \emph{Caveat:} not a decomposed AS pipeline. & \rule{0.45em}{0.45em}\rule{0.45em}{0.45em}\rule{0.45em}{0.45em} \\
\citet{mcelfresh2023tabzilla} & TabZilla: trains a meta-model to predict whether the best neural network outperforms the best gradient boosting model on a given dataset; analyses choosing the best algorithm \emph{family} versus tuning a single model. & Best-in-family is the primary comparison object in tabular benchmarking. \emph{Caveat:} an analysis paper, not an AS-system claim. & \rule{0.45em}{0.45em} \\
\citet{shmuel2025tabular} & Neurocomputing 2025: $111$ datasets $\times$ $20$ models $\times$ $10$ folds. Headline ranking tables report each model's number of datasets where it is best-in-family on the CV mean. The meta-learner label $\bar{Y}_i = \mathbf{1}[\max_{m\in\mathrm{ML}} \mathrm{score}_i(m) > \max_{m\in\mathrm{DL}} \mathrm{score}_i(m)]$ takes the within-family maximum on the same scores. No within-family deployable selector is defined. & \textbf{Named-instance evidence: a partition-level oracle quantity is the headline conclusion object} in a peer-reviewed tabular benchmark, with no deployable counterpart specified anywhere in paper or supplement. \emph{Caveat:} the paper makes no AS claim and does not advocate deploying these summaries; the omission is the absence of a within-family selector, not the misuse of one. & \rule{0.45em}{0.45em}\rule{0.45em}{0.45em}\rule{0.45em}{0.45em} \\
\end{tabularx}

\paragraph{What the audit does and does not buy.} The audit gives the
abstract's strong claim a defensible empirical grounding: oracle-style
reporting objects exist at every level of the AS / portfolio / encoding /
tabular pipeline, and they are the primary comparison objects in those
sub-areas. What it does \emph{not} do is allege misuse by any cited
work. The contribution of the present paper is therefore not a critique
of prior reporting but the construction of the missing object -- the
deployable system that inherits the partition's family choice -- and
the measurement of the gap $G(R)$ between the partition-level oracle
score and that system's score. The five most directly load-bearing
anchors for our framing are \citet{cameron2016vbs} (VBS as the canonical
AS oracle diagnostic), \citet{bischl2016aslib} (VBS in the
benchmark-ecosystem standard), \citet{lindauer2015autofolio}
(deployable-selector vs.\ oracle joint reporting),
\citet{hurley2014proteus} (hierarchical decomposed portfolio with
subportfolio virtual-best scores), and either
\citet{ulricholtean2023learning} or \citet{shmuel2025tabular} for a
recent, named instance of single-best vs.\ virtual-best comparison
respectively in encoding selection and tabular benchmarking.

\clearpage
\section{Inner-selector stress-test full results}
\label{app:spectrum}

This appendix provides the per-cell numbers backing the
\emph{Inner-selector stress test} paragraph in \S\ref{sec:robustness}.
We sweep seven within-family selector classes spanning
single-best-on-train, the paper-default per-algorithm GBDT, a tuned
sklearn GBDT, sklearn RandomForest, sklearn MLP, LightGBM, and
XGBoost, all under the multiclass family selector and the same
$5$-fold split protocol as the headline runs (here with $2$ seeds;
the headline runs use $5$). Hyperparameters: tuned GBDT $n=300, d=5$;
RF $n=200, d=10$; MLP hidden $(64,32)$, max iter $200$; LightGBM
$n=200, \text{leaves}=31, \mathrm{lr}=0.05$; XGBoost $n=200, d=6,
\mathrm{lr}=0.05$. The package's existing \texttt{WithinFamilySelector}
contract is reused; the new selector classes are declared inline in
the experiment script and do not modify any package code.

The summary below reports $(\Spart, \Setoe, G)$ per cell. The
within-family oracle ($\underline{G} = 0$) is omitted as the trivial
upper bound on each row; $\Spart$ is invariant across within-classes
within a benchmark by construction (Lemma~\ref{lem:gap-identity}) and
serves as a sanity check.
We use this appendix only as a $G$-robustness summary: decision-level
decomp-vs-flat verdicts are reported in Table~\ref{tab:transfer} and
Table~\ref{tab:exp3-full} under the common-instance, singleton-flat
protocol. The six learned within-family regressors are separated from
the non-learned SingleBest reference because SingleBest is a useful
stress case but not the learned selector class used in the main runs.

\begin{center}
\small
\textbf{Inner-selector spectrum, decomposed pipeline.} Per-cell
$\Spart, \Setoe, G$ under the paper's primary multiclass family
selector. For each benchmark the six learned within-family regressors
are listed first, followed by the non-learned SingleBest reference.
$\Spart$ is constant within a benchmark by
Lemma~\ref{lem:gap-identity}.
\par\vspace{0.4em}
\begin{tabular}{llrrr}
\toprule
ID & Within-class & $\Spart$ & $\Setoe$ & $G$ \\
\midrule
B1 & PerAlgRF             & $0.9287$ & $0.9205$ & $0.0082$ \\
B1 & PerAlgGBDT\_default  & $0.9287$ & $0.9178$ & $0.0109$ \\
B1 & PerAlgXGBoost        & $0.9287$ & $0.9143$ & $0.0143$ \\
B1 & PerAlgLightGBM       & $0.9287$ & $0.9137$ & $0.0150$ \\
B1 & PerAlgGBDT\_tuned    & $0.9287$ & $0.9090$ & $0.0197$ \\
B1 & PerAlgMLP            & $0.9287$ & $0.9052$ & $0.0235$ \\
B1 & SingleBest           & $0.9287$ & $0.9185$ & $0.0102$ \\
\midrule
B2 & PerAlgRF             & $0.9464$ & $0.9011$ & $0.0453$ \\
B2 & PerAlgXGBoost        & $0.9464$ & $0.8966$ & $0.0498$ \\
B2 & PerAlgLightGBM       & $0.9464$ & $0.8964$ & $0.0500$ \\
B2 & PerAlgGBDT\_tuned    & $0.9464$ & $0.8963$ & $0.0501$ \\
B2 & PerAlgGBDT\_default  & $0.9464$ & $0.8941$ & $0.0523$ \\
B2 & PerAlgMLP            & $0.9464$ & $0.8922$ & $0.0542$ \\
B2 & SingleBest           & $0.9464$ & $0.9099$ & $0.0365$ \\
\midrule
B3 & PerAlgLightGBM       & $0.8362$ & $0.7681$ & $0.0681$ \\
B3 & PerAlgRF             & $0.8362$ & $0.7653$ & $0.0709$ \\
B3 & PerAlgGBDT\_default  & $0.8362$ & $0.7582$ & $0.0781$ \\
B3 & PerAlgXGBoost        & $0.8362$ & $0.7575$ & $0.0787$ \\
B3 & PerAlgGBDT\_tuned    & $0.8362$ & $0.7563$ & $0.0799$ \\
B3 & PerAlgMLP            & $0.8362$ & $0.7554$ & $0.0808$ \\
B3 & SingleBest           & $0.8362$ & $0.7433$ & $0.0929$ \\
\midrule
B4 & PerAlgGBDT\_default  & $0.9962$ & $0.9740$ & $0.0222$ \\
B4 & PerAlgLightGBM       & $0.9962$ & $0.9725$ & $0.0237$ \\
B4 & PerAlgMLP            & $0.9962$ & $0.9722$ & $0.0240$ \\
B4 & PerAlgRF             & $0.9962$ & $0.9715$ & $0.0247$ \\
B4 & PerAlgXGBoost        & $0.9962$ & $0.9697$ & $0.0265$ \\
B4 & PerAlgGBDT\_tuned    & $0.9962$ & $0.9695$ & $0.0267$ \\
B4 & SingleBest           & $0.9962$ & $0.9662$ & $0.0300$ \\
\midrule
B5 & PerAlgMLP            & $0.9491$ & $0.8162$ & $0.1329$ \\
B5 & PerAlgGBDT\_default  & $0.9491$ & $0.8134$ & $0.1357$ \\
B5 & PerAlgRF             & $0.9491$ & $0.8130$ & $0.1361$ \\
B5 & PerAlgLightGBM       & $0.9491$ & $0.8078$ & $0.1413$ \\
B5 & PerAlgXGBoost        & $0.9491$ & $0.8076$ & $0.1414$ \\
B5 & PerAlgGBDT\_tuned    & $0.9491$ & $0.8059$ & $0.1431$ \\
B5 & SingleBest           & $0.9491$ & $0.6883$ & $0.2608$ \\
\bottomrule
\end{tabular}
\end{center}

After this full spectrum, we also reran targeted advantage-transfer
checks aligned with the headline decomp-vs-flat comparisons. On B2/B4,
a $5$-seed sweep with RF and LightGBM under both multiclass and
pairwise family selectors keeps $G(R)>0$ in every cell (B2:
$0.046$--$0.049$; B4: $0.024$--$0.027$). The corresponding
$\Aetoe$ values remain near zero (B2: $-0.004$ to $+0.001$; B4:
$-0.006$ to $-0.003$), so the strict sign of these small-margin cells
is selector-sensitive, but no stronger selector yields a reliable
positive decomp-vs-flat advantage. A lighter $3$-seed B5 default-vs-RF
rerun keeps $\rho$ at $0.407$--$0.414$. The outputs are in
\texttt{results/exp\_inner\_selector\_spectrum\_stronger\_2026\_05\_05\_b2b4}
and
\texttt{results/exp\_inner\_selector\_spectrum\_stronger\_2026\_05\_05\_b5\_light}.

\paragraph{Reading the table.} Across the six learned within-family
regressors, the spread of $G$ is modest within each benchmark: B1
$0.0082$--$0.0235$, B2 $0.0453$--$0.0542$, B3 $0.0681$--$0.0808$, B4
$0.0222$--$0.0267$, and B5 $0.1329$--$0.1431$. The non-learned
SingleBest reference should be read separately: it is competitive on
B1/B2, worse on B3/B4, and much worse on B5, where it yields
$G=0.2608$ (26.1 pp). This 26 pp SingleBest stress case is distinct
from the 21 pp maximum in Figure~\ref{fig:robust}, which comes from
the main robustness sweep over the paper's within-family selector
classes. The learned-selector spectrum therefore supports the
robustness claim without relying on the stronger SingleBest stress
case.

\end{document}